\documentclass[runningheads]{llncs}

\usepackage[T1]{fontenc}
\usepackage{amsmath,amssymb}
\usepackage{mathtools}
\usepackage{bbm}

\usepackage{scalerel} %

\usepackage{times}
\usepackage{helvet}
\usepackage{courier}
\usepackage{orcidlink}

\usepackage{graphicx}
\usepackage{subcaption}
\graphicspath{{fig/}{../fig/}{fig/exp_heatmap/}{fig/heat1preproc}{fig/heat22scores}}
\usepackage[ruled,vlined,linesnumbered]{algorithm2e} 

\usepackage[appendix=append]{apxproof}

\usepackage{cleveref}
\usepackage{autonum} 

\usepackage {booktabs}

\makeatletter 
\@ifclassloaded{llncs}{%
}{%
  \usepackage{orcidlink}

\usepackage{graphicx}
\usepackage{amsmath,amssymb}
\usepackage{mathtools}
\mathtoolsset{showonlyrefs=false} 
\usepackage[ruled,vlined,linesnumbered]{algorithm2e} 
\usepackage{cite}
\usepackage{bbold} 

\usepackage{caption} 
\captionsetup{belowskip=0pt}
\usepackage[skip=0.5ex]{subcaption} 
\usepackage{url}
\usepackage{bm}
\usepackage{bbm}
\usepackage{textcomp}
\usepackage{wrapfig}

\usepackage{tikz}
\usetikzlibrary{backgrounds, positioning}

\newsavebox{\cmbox}

\newtheorem{theorem}{Theorem}

\newtheorem{lemma}{Lemma}

\newtheorem{definition}{Definition} 
\crefname{definition}{Def.}{Defs.}

\newtheorem{problema}{Problem} 

\newcounter{claimnum}

\usepackage{cleveref}
\crefname{definition}{Definition}{Definitions}
\crefname{lemma}{Lemma}{Lemmas}
\crefname{proposition}{Proposition}{Propositions}
\crefname{theorem}{Theorem}{Theorems}
\crefname{remark}{Remark}{Remarks}
\crefname{problem}{Problem}{Problems}
\crefname{condition}{Condition}{Conditions}
\crefname{assumption}{Assumption}{Assumptions}
\crefname{table}{Table}{Tables}
\crefname{figure}{Fig.}{Figures}
\crefname{equation}{Eq.}{Equations}
\crefname{section}{Sec.}{Sections}
\crefname{chapter}{Chapter}{Chapters}
\crefname{algocf}{Algorithm}{Algorithms}
\Crefname{algocf}{Algorithm}{Algorithms}

  \newcommand{\authorrunning}[1]{}
  \newcommand{\institute}[1]{\date{#1}}
  \newcommand{\email}{}
  \newcommand{\inst}[1]{}
  \newcommand{\keywords}[1]{\textit{keywords}: #1}

}
\makeatother 

\usepackage[inline,shortlabels]{enumitem} 
\setlist{itemsep=0.0pc}
\setlist[enumerate,1]{ label= (\arabic*), ref=\arabic*}
\setlist[enumerate,2]{ label= (\roman*),ref  = \roman*}
\setlist[enumerate,3]{ label= (\alph*), ref  = (\alph*)}
\setlist[itemize,1]{label=--}
\usepackage{cite}

\def\idrm#1{\ensuremath{\mathrm{#1}}}
\def\idtt#1{\ensuremath{\mathtt{#1}}}

\normalfont 

\newcommand{\sete}[1]{ \{\kern.25em{#1}\kern.25em \}}
\newcommand{\set}[1]{\{\kern0.00em#1\kern0.00em\}} 
\newcommand{\lst}[1]{[\kern0.05em#1\kern0.05em]}

\newcommand{\by}{\times}
\newcommand{\pair}[1]{\langle #1\rangle}

\newcommand{\pow}[1]{2^{#1}}

\makeatletter
\DeclareRobustCommand\midop[1]{%
  \mathop{\vphantom{#1}\mathpalette\midop@{#1}}\slimits@
}
\newcommand{\midop@}[2]{%
  \vcenter{%
    \sbox\z@{$#1\sum$}%
    \hbox{\resizebox{\ifx#1\displaystyle0.475\fi\dimexpr\ht\z@+\dp\z@}{!}{$\m@th#2$}}%
  }%
}
\makeatother
\makeatletter
\DeclareRobustCommand\bigop[1]{%
  \mathop{\vphantom{#1}\mathpalette\bigop@{#1}}\slimits@
}
\newcommand{\bigop@}[2]{%
  \vcenter{%
    \sbox\z@{$#1\sum$}%
    \hbox{\resizebox{\ifx#1\displaystyle0.8\fi\dimexpr\ht\z@+\dp\z@}{!}{$\m@th#2$}}%
  }%
}
\makeatother
\makeatletter
\DeclareRobustCommand\mybigopin[1]{%
  \mathop{\vphantom{\sum}\mathpalette\mybigopin@{#1}}\slimits@
}
\newcommand{\mybigopin@}[2]{%
  \vcenter{%
    \sbox\z@{$#1\sum$}%
    \hbox{\resizebox{\ifx#1\displaystyle1.0\fi\dimexpr\ht\z@+\dp\z@}{!}{$\m@th#2$}}%
  }%
}
\DeclareRobustCommand\mymidopin[1]{%
  \mathop{\vphantom{\sum}\mathpalette\mymidopin@{#1}}\slimits@
}
\newcommand{\mymidopin@}[2]{%
  \vcenter{%
    \sbox\z@{$#1\sum$}%
    \hbox{\resizebox{\ifx#1\displaystyle0.25\fi\dimexpr\ht\z@+\dp\z@}{!}{$\m@th#2$}}%
  }%
}
\makeatother

\renewcommand{\paragraph}[1]{\textbf{#1}}

\renewcommand{\vec}[1]{\boldsymbol{#1}} 

\newcommand{\nat}{\mathbb{N}}

\newcommand{\rat}{\mathbb{R}}

\newcommand{\bat}{\mathbb{B}}

\newcommand{\Zero}{\mathbf{0}}
\newcommand{\One}{\mathbf{1}}

\newcommand{\ADTC}{\textsf{ADTC}}

\newcommand{\ud}[1]{\underline{#1}}
\newcommand{\vsigma}[1][{}]{\vec\sigma^{#1}}
\newcommand{\vm}{{\vec m}}
\newcommand{\A}{{\sig A}}
\newcommand{\TS}[1][{\M[\Delta]}]{{{#1}\times \pow{\sig X}}}
\renewcommand{\S}{{\sig S}}
\newcommand{\HOM}{\idtt{Hom}}

\newcommand{\idx}[3]{_{#1=#2}^{#3}} 

\newcommand{\medstrut}{\rule{0pt}{1pc}}
\newcommand{\iFor}[2]{\textbf{for} {#1} \textbf{do}\hspace{0.25em}{\relax #2}}

\newcommand{\sig}[1]{\mathcal{#1}}
\newcommand{\proc}[1]{\pb{#1}}
\newcommand{\op}[1]{\mathtt{#1}} 
\newcommand{\pb}[1]{{\mbox{\textsf{#1}}}} 
\newcommand{\stk}[1]{\substack{#1}}

\newcommand{\R}{\sig R} 

\newcommand{\M}[1][\Delta]{\sig M_{#1}} 

\newcommand{\prodseq}[4]{{#1}_{#3}\kern-.5pt\times\dots\times\kern-.5pt{#1}_{#4}}

\newcommand{\Prod}{\raisebox{-.99pt}{\scalerel*{\prod}{\overline{M}}}\kern0pt}

\newcommand{\bigmul}{\bigotimes}
\newcommand{\setmn}[1]{\setminus\set{#1}}

\newcommand{\Data}{\pow\D}

\newcommand{\X}{\sig X}
\newcommand{\D}{\X}

\renewcommand{\S}{\sig S}

\newcommand{\size}{\op{size}}

\newcommand{\infrac}[2]{({#1}/{#2})}

\newcommand{\nd}[1]{\langle #1\rangle} 
\renewcommand{\=}{\kern-2pt=\kern-2pt}

\newcommand{\Ttensor}[1][]{\mathbb{T}[\vec\M, \R]}

\newcommand{\Dint}[2]{[#1\hskip1pt\raise-0pt\hbox{$:$\hskip1pt}#2]}

\crefname{theorem}{Theorem}{Theorems}
\crefname{cor}{Corollary}{Corollaries}
\crefname{proposition}{Proposition}{Propositions}
\crefname{lemma}{Lemma}{Lemmas}
\crefname{property}{Property}{Properties}
\crefname{condition}{Condition}{Conditions}
\crefname{fact}{Fact}{Facts}
\crefname{problem}{Problem}{Problems}
\crefname{remark}{Remark}{Remarks}
\crefname{algorithm}{Algorithm}{Algorithms}
\crefname{section}{Sec.}{Secs.}
\crefname{subsection}{Sec.}{Secs.}
\crefname{proof}{Proof}{Proofs}

\begin{document}

\title{Algebraic Model Counting for Global Analysis of Optimal Decision Trees}
\titlerunning{Algebraic Model Counting for Optimal Decision Trees}
\author{Hiroki Arimura}
\institute{Hokkaido University, N14 W9, Kita-ku, Sapporo, Hokkaido, 060-0814 Japan
  \email{arim@ist.hokudai.ac.jp}
}


\maketitle

\begin{abstract}
Ensuring model reliability in Explainable AI requires a global assessment of the hypothesis space.
We propose a formal framework for the exhaustive analysis of optimal and near-optimal decision trees, called \textit{Algebraic Decision Tree Counting} ({ADTC}). 
Inspired by Algebraic Model Counting (AMC) in knowledge representation, ADTC reformulates diverse analytical tasks, such as optimization, counting, and sampling, into a unified sum-of-products computation over a semiring $R$.
While the
hypothesis space of decision trees is doubly exponential with respect to the maximum depth $\Delta$, our dynamic programming algorithm achieves $O^*(n^{O(\Delta)})$ time complexity in the number of features $n$, where $O^*$ suppresses polynomial factors.
To handle complex constraints consisting of multiple tree metrics, we introduce \textit{model behavior tensors} that aggregate semiring values via convolution products over a tensor semiring. This algebraic approach efficiently constructs a model profile that
captures 
the global landscape and trade-offs between criteria such as accuracy, size, and fairness. We demonstrate the utility of our software, 
\texttt{emtrees}, 
on real-world datasets, illustrating how ADTC facilitates evidence-based model selection in sensitive domains.

\keywords{
  algebraic model counting
  \and decision trees
  \and model behavior tensors
  \and predictive multiplicity
  \and algorithmic fairness
  \and global assessment
}

  
\end{abstract}

\section{Introduction}
\label{sec:intro}

\newcommand{\profilewidth}{0.42\linewidth}


\textit{Explainable AI} (XAI) \cite{rudin2022interpretable} has placed a growing emphasis on \textit{decision trees} \cite{breiman:etal1984classification} due to their inherent interpretability. However, the phenomenon of \textit{predictive multiplicity}~\cite{black2022model,Marx:ICML2020}, where multiple models achieve similar performance but offer different explanations, poses a significant challenge for model reliability.
To address this, we propose \textit{Algebraic Decision Tree Counting} (\pb{ADTC}), a framework that performs a global assessment of the entire hypothesis space rather than identifying a single heuristic solution.

Our \pb{ADTC} is inspired by \textit{algebraic model counting} (\pb{AMC})
proposed by De Raedt and Kimmig \cite{kimmig2017algebraic} in knowledge representation \cite{darwiche2002knowledge}.
AMC is an instance of the algebraic computation paradigm (see, e.g., Aji and McEliece \cite{aji:mceliece:ieeeit2002generalized}, Eiter and Kiesel \cite{eiter2023semiring}, and Goral \cite{goral:giesen:blacher:staudt:kaus:aaai2024model}) that generalizes logical counting by evaluating formulas over a semiring.
Specifically, our framework aggregates behaviors of {near-optimal prediction models} based on a dataset and a constraint formula, whereas AMC aggregates weights of satisfying assignments for a given formula.
In this context, \textit{near-optimal models} (or \textit{good models}) \cite{nijssen:fromont:dl8:dmkd:2010,demirovic:aaai2021nonlinear:metrics,lin2020gosdt:generalized,Marx:ICML2020} refer to those within the \textit{Rashomon set} \cite{breiman:statsci2001twocultures,rudin2022interpretable,xin2022treefarm:exploring} that satisfy a predefined performance threshold relative to the empirical best model.

\begin{figure}[t]
  \centering
  \begin{subfigure}[t]{\profilewidth}
    \centering
    \includegraphics[width=0.95\linewidth]{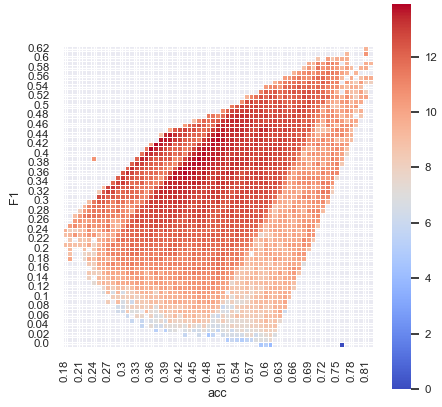}
    \caption{Accuracy vs. F1-score.}
    \label{fig:profile:acc_f1}
  \end{subfigure}
  \hfill
  \begin{subfigure}[t]{\profilewidth}
    \centering
    \includegraphics[width=0.95\linewidth]{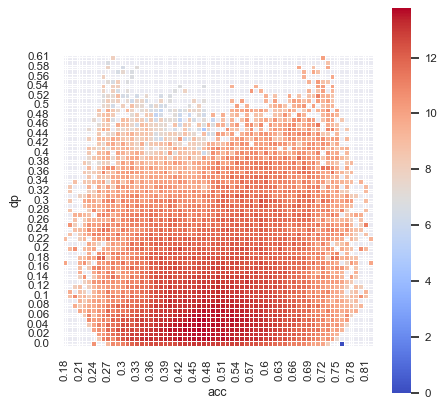}
    \caption{Accuracy vs. DP gap.}
    \label{fig:profile:acc_dp}
  \end{subfigure}
  \vspace{-1mm}
  \caption{Model profiles of 34,706 decision trees
    with $\texttt{maxdep}=5$ and $\texttt{relminsup}=0.15$ 
    from the Rashomon set on the \texttt{adult} dataset
    generated by \pb{ADTC}, showing
    (a) trade-off between Accuracy and F1-score, and (b) model multiplicity between Accuracy and DP-gap.
    See Fig.~\ref{fig:profile:acc-eodd:acc-eopp} for more examples
    and \cref{subsec:exp:tradeoff} for explanation.
  }
  \label{fig:profile:acc-fone:acc-dpgap}
\end{figure}

Conceptually, the framework \pb{ADTC} executes an ``\textit{enumerate-\allowbreak filter-\allowbreak aggregate}'' style query over the huge hypothesis space of decision trees, similar to
analytic queries in relational databases \cite{bakibayev2012fdb}.
While the number of syntactically distinct decision trees is $n^{\Theta(2^\Delta)}$, our approach achieves a time complexity of $O^*(n^{\Delta})$, which is polynomial in data size $n$ but exponential in query size $\Delta$, by employing a dynamic programming scheme based on tensor operations. Consequently, our framework \textit{avoids the double-exponential complexity} of exhaustive enumeration through algebraic aggregation.

The primary contributions of this paper are summarized as follows:
\begin{itemize}
\item \paragraph{Algebraic framework for decision trees}:
  In \cref{sec:preliminaries}, we propose \pb{ADTC}, a unified framework based on algebraic model counting~\cite{kimmig2017algebraic,eiter2023semiring}, to evaluate and aggregate decision trees over arbitrary commutative semirings.
  It provides a comprehensive perspective that generalizes previous studies on
  \textit{induction} \cite{mehta:raghavan2002decision,nijssen:fromont:dl8:dmkd:2010,aglin2020learning:dl8},
  \textit{enumeration}\cite{ruggieri2017enumerating},
  \textit{counting} as well as \textit{sampling}~\cite{arimura:osabe:uno2017ifors,xin2022treefarm:exploring}, and
  \textit{Pareto optimization}~\cite{demirovic:aaai2021nonlinear:metrics}
  of optimal decision trees, which have often been treated independently.

\item \paragraph{Theoretical complexity guarantees}:
  In \cref{sec:method}, we develop a dynamic programming (DP) algorithm {\proc{EMT}} using tensor operations (Algorithm~\ref{algo:emt}) that achieves a running time of $O^*(n^{O(\Delta)})$ (\cref{thm:algo:emt:main}). This complexity is polynomial in data size $n$, although exponential in query size $\Delta$,
  avoiding the double-exponential bottleneck of exhaustive enumeration.
  The key to this efficiency lies in aggregating information of multiple metrics,
  such as \texttt{size} and \texttt{error} of decision trees, 
  into
  a semiring of \textit{model behavior tensors}, that allow for efficient computation via DP over a \textit{factorized representation} of the decision tree space.
  In \cref{sec:ext}, we extend
  \proc{EMT}
  to \textit{nonlinear metrics}
  \cite{demirovic:aaai2021nonlinear:metrics}
  as well as
  \textit{model selection and sampling}~\cite{arimura:osabe:uno2017ifors,goral:giesen:blacher:staudt:kaus:aaai2024model}. 

\item \paragraph{Transparent model profiling}:
  In \cref{sec:exp}, we demonstrate that our framework enables global navigation of the rashomon set, facilitating transparent and rigorous analysis of
  \textit{trade-offs between multiple objectives}, such as the \textit{accuracy} and \textit{F1-score} in Fig.~\ref{fig:profile:acc-fone:acc-dpgap}, 
  and  
  \textit{assessment of preprocessing},
  such as the \textit{original} and \textit{balanced accuracy} in Fig.~\ref{fig:profile:acc:bacc}
  (in \cref{sec:exp}). 
  This profiling addresses diverse requirements in real-world machine learning deployments.
  Finally, we conduct empirical evaluations of our \texttt{emtrees} software~\cite{emtrees26}, an implementation of the \proc{EMT} algorithm 
  in Secs.~\ref{sec:method} and \ref{sec:ext},
  on the standard benchmark UCI \texttt{adult} dataset to demonstrate its scalability and effectiveness in global analysis tasks for sensitive domains.
\end{itemize}

Overall, this proposed framework enables a global navigation of the Rashomon set, allowing \pb{ADTC} to analyze the trade-offs between multiple objectives, such as accuracy and fairness. By providing a rigorous profile of all near-optimal models, our framework facilitates transparent and rigorous model selection for diverse requirements in real-world machine learning deployments.

\begin{figure}[t]
  \centering
  \begin{subfigure}[t]{\profilewidth}
    \centering
    \includegraphics[width=\linewidth]{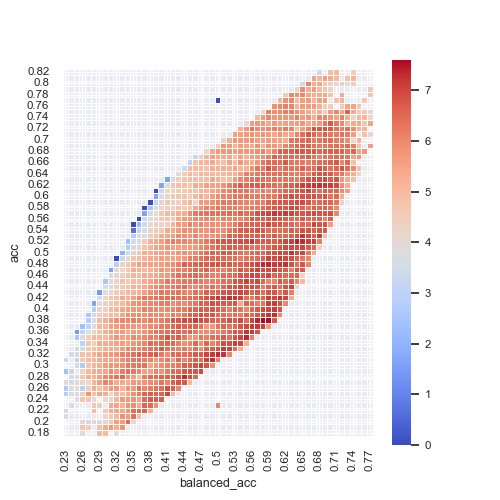}
    \caption{Original \texttt{adult} dataset.}
    \label{fig:profile:acc:bacc:orig}
  \end{subfigure}
  \hfill
  \begin{subfigure}[t]{\profilewidth}
    \centering
    \includegraphics[width=\linewidth]{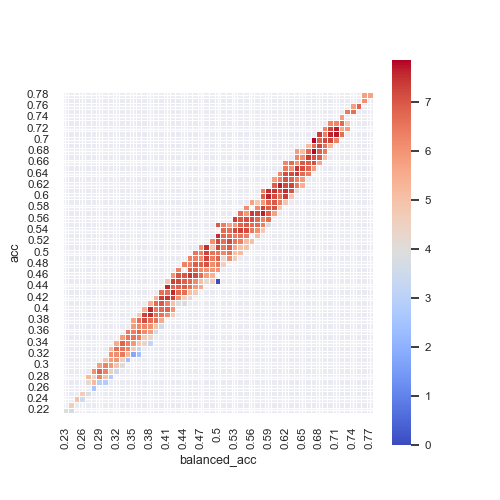}
    \caption{Balanced version of \texttt{adult} dataset.}
    \label{fig:profile:acc:bacc:bal}
  \end{subfigure}

  \vspace{-1mm} 
  \caption{Model profiles of 34,706 decision trees
    with $\texttt{maxdep}=5$ and $\texttt{relminsup}=0.15$ 
    from the Rashomon set on the \texttt{adult} dataset
    generated by \pb{ADTC}
    on assessment of preprocessing with accuracy and balanced accuracy. 
    See \cref{subsec:exp:assessment} for explanation. 
  }
  \label{fig:profile:acc:bacc}
\end{figure}

\subsection{Related work}
The global analysis of decision tree spaces intersects with several research areas, including optimal decision tree construction, model counting, and the study of the Rashomon set.

\paragraph{Optimal decision tree construction}. Recent advancements have introduced exact algorithms such as \pb{GOSDT} \cite{lin2020gosdt:generalized} and \pb{DL8.5} \cite{aglin2020learning:dl8}. While these methods excel at finding a single optimal tree
\cite{nijssen:fromont:dl8:dmkd:2010,mehta:raghavan2002decision},
they are not designed to characterize the entire landscape of near-optimal models. In contrast, our approach enables a comprehensive aggregation of these models through a unified algebraic framework.

\paragraph{The Rashomon set and predictive multiplicity}. 
Previous work by Dong and Rudin \cite{Dong:NMI2020} explored the Rashomon set \cite{rudin2022interpretable} via \textit{sampling}, while Xin \textit{et al.}~\cite{xin2022treefarm:exploring} utilized \textit{compressed representations}. However, the latter requires \textit{exponential memory} and is primarily limited to counting~\cite{xin2022treefarm:exploring}. In contrast, our approach employs model behavior tensors computable in \textit{polynomial space}
(see~\cref{thm:algo:emt:main}).
Within a unified semiring-based framework of \pb{ADTC}, we provide an \textit{exact profile} of model diversity that reveals the underlying \textit{predictive multiplicity} and \textit{trade-offs}, enabling various analytical tasks beyond simple counting. 

\paragraph{Algebraic model counting}.
Our work bridges the gap between \textit{Algebraic Model Counting} (\pb{AMC}) \cite{kimmig2017algebraic} and the \textit{combinatorial space of decision trees} by adapting the \pb{AMC} paradigm to the \textit{recursive structure of trees}. This allows for efficient \textit{sum-of-product computations}~\cite{eiter2023semiring,goral:giesen:blacher:staudt:kaus:aaai2024model} over tensors, avoiding the \textit{double-exponential bottlenecks} associated with naive enumeration.


\section{Preliminaries}
\label{sec:preliminaries}

The following notation and terminology are used throughout this paper. 
For standard terminology not defined below, please consult \cite{hastie2001eslbook} for machine learning, \cite{Cormen:Stein:Rivest:Leiserson:2009} for algorithms, \cite{ebbinghaus:flum:thomas:1994:mathematical:logic} for logic, and \cite{burgisser2013algebraic} for algebraic computation.

\subsection{Basic definitions}
\paragraph{Numbers, sets, and vectors.} 
Let $\mathbb{R}$, $\mathbb{Z}$, and $\mathbb{N} = \{0, 1, \dots\}$ be the sets of all real numbers, integers, and natural numbers, respectively. 
For an integer $n, i, j\in \mathbb{N}$ with $i\le j$, we define
$[n]=\{1, \dots, n\}$, and $[i..j] = \set{i, i+1, \dots, j}\subseteq \mathbb{N}$.
For any sets $A$ and $B$, $|A|$ denotes the \textit{cardinality} of $A$, $2^A$ the power set of $A$, and $A^*$ the set of all finite sequences of elements from $A$. 
The notation $B^A$ denotes the set of all mappings from $A$ to $B$.
For $d \in \nat$, we write $d$-vector $\vec x = (x_j)\idx j1d$ as $(x_j)_d$.
For $d \in \mathbb{N}$ and a $d$-vector $\vm = (m_1, \dots, m_d) \in \mathbb{N}^d$, let $\mathbb{N}[\vm]$
denote
$[m_1] \times \cdots \times [m_d] \subseteq \mathbb{N}^d$.

\paragraph{Semirings and polynomials.}
A \textit{monoid} $(M, \cdot, e)$ is a set M equipped with an associative binary operation $\cdot$ and an identity $e \in M$. It is commutative if $a + b = b + a$ holds. 
A \textit{semiring} is an algebraic structure $\mathcal{R} = (R, +, \cdot, \Zero, \One)$, where $a\cdot b$ is written $ab$, and 
\begin{enumerate*}[label=(\roman*)]
\item $(R, +, \Zero)$ is a commutative monoid,
\item $(R, \cdot, \One)$ is a monoid,
\item The multiplication $\cdot$ \textit{distributes} over addition $+$: $a (b + c) = (a b) + (a c)$ and $(a + b) c = (a c) + (b c)$, and 
\item $\Zero$ is \textit{absorbing}, i.e., $\Zero a = a \Zero = \Zero$
\end{enumerate*}
for all $a,b,c \in R$.
In this work, we mainly consider commutative semirings.
For $d \in \mathbb{N}$ and a semiring $R$, we denote by $R[(x_j)_d]$ the semiring of $d$-variate polynomials with coefficients in $R$.
Each polynomial 
in $R[(x_j)_d]$
with indeterminates $x = (x_j)_d$ is represented as
\begin{math}
  p(x)
  = \sum_{\vec{v}} c_{\vec{v}} x^{\vec k}
  = \sum_{\vec{v} \in \mathbb{N}^d} c_{\vec{v}} \prod_{j=1}^d x_j^{v_j}, 
\end{math}
where
$c_{\vec{v}} \in R$. 
These structures are fundamental for \textit{algebraic model counting} \cite{kimmig2017algebraic}.

\subsection{Prediction models and their behaviors}

\paragraph{Decision trees.}
Let $\sig{X}$ be a universe of \textit{data}.
We assume $n$ \textit{Boolean features} $F = \{f_1$, $\dots$, $f_n\}$ and $c$ categorical \textit{labels} $L = \set{y_0, \dots, y_{c-1}}$, where each {Boolean feature} is a mapping $f: \sig{X} \to \set{0,1}$.
A \textit{decision tree} (a \textit{tree}, for short) over alphabets $\Sigma = (F, L)$ is an expression represented as a node-labeled binary tree.
A tree $t$ and its \textit{string notation} are defined inductively as follows: $t$ is either
\begin{enumerate*}[label=(\roman*)]
  \item a leaf labeled with $y \in L$, denoted by $y$ itself, or
  \item a composite tree with a root labeled with a feature $f$ in $F$ having two children $t_0$ and $t_1$, denoted by $\langle f, t_0, t_1 \rangle$.
  \end{enumerate*}
  A tree $t$ induces a \textit{prediction function} $f_t: \mathcal{X} \to \mathcal{Y}$ that, given a data $x \in \sig X$, returns a prediction label $y = f_t(x) \in L$ via a standard root-to-leaf traversal. A \textit{sample} is a set of data $S = \set{x_i}\idx i1m$ in $\Data$. An \textit{input data} is a pair $(S, y)$ of a sample $S$ and a \textit{labeling function} $y: S \to L$.

  \paragraph{Hypothesis spaces}: We consider the hypothesis space $\mathcal{M}_{\Delta}$
  of \textit{all decision trees with depth at most $\Delta \ge 0$} over a given alphabet $\Sigma = (F, L)$
  because complexity is primarily governed by depth~\cite{mehta:raghavan2002decision}.
  The number of syntactically distinct Boolean trees in $\mathcal{M}_{\Delta}$ is $n^{\Theta(2^{\Delta})}$, which exhibits double-exponential growth.
  While $\M[\Delta]$ serves as our baseline, we can further restrict it by imposing
  $\texttt{minsup} \ge 1$ (\textit{minimum support})
  and $\texttt{nbins}$ (discretization bins), denoted as
  $\mathcal{M}_{\Delta, \texttt{minsup}, \texttt{nbins}}$
  for effective pruning.

\subsection{Model behavior metrics}
We will analyze the hypothesis space $\M$ of models by means of measuring functions, called \textit{model metrics} (or \textit{metrics}). 
Let $\Delta \in \nat$.
In our global analysis of decision tree space $\M$, a smallest unit of analysis is a pair $(t, S)$ of a tree $t$ and a sample $S$, called a \textit{structure}. The domain of structures is $\S = \TS$. 
Then, a \textit{model metrics} (or \textit{metrics}) is any mapping $\sigma: \S \to \idtt{dom}(\sigma)$ that assigns a value $k = \sigma(t, S) \in \S$ to each model $t \in \M$, where $\idtt{dom}(\sigma)$ is any set.
A metric $\sigma$ is said to be \textit{primitive} if it has an integer-valued range $\idtt{dom}(\sigma) = [0..m_\sigma]$, where $m_\sigma \in \nat$ is called the \textit{maximum range}.
The following are examples of primitive metrics used in this paper. 
\begin{enumerate*}[label=(\roman*)]
\item \textit{Structural metrics}: The \textit{size} $size(t) = |Lv(t)|$ and \textit{depth} $dep(t)$, respectively, with ranges $1 \le size(t) \le 2^\Delta$ and $0 \le dep(t) \le \Delta$. 

\item \textit{Semantic metrics}: Given an input data $D = (S, y)$, the \textit{error} $err(t, S) = \sum_{x \in S} \One[\phi_t(x) \neq y(x)]$ measures predictive performance, and the \textit{support} $supp(t, S) = \min_{\lambda \in Lv(t)} |S(\lambda)|$ captures the statistical reliability of leaf nodes.
\end{enumerate*}
In \cref{sec:ext:nonlinear:metrics}, we will introduce primitive metrics related to contingency tables as well as complex nonlinear metrics.

To devise efficient algorithms for \pb{ADTC}, we need to introduce the notion of decomposition of a metric, in a similar way to \pb{AMC}~\cite{kimmig2017algebraic,goral:giesen:blacher:staudt:kaus:aaai2024model}.
For any feature $f \in F$, we define the \textit{split} of a data set $S$ by $f$ to be the partition $S = S_{f=1} \uplus S_{f=0}$ such that  $S_{f=i} = \{ x \in S \mid f(x) = i \}$ for each $i \in \{0, 1\}$.


\begin{definition}[Decomposable metrics]\rm
  A function $f: \TS \to \nat$ over the domain of structures $\TS$ is \textit{decomposable} on tree structures if there exists some monoid $(\nat, \circ, 0)$ such that $f$ satisfies the recurrence:
\begin{align}\label{eq:1}
  f(t, S) =
  \begin{cases}
    \ud{f}(y, S)
    & \text{if } t = y \in L, \\
    f(t_1, S_{f=1}) \circ f(t_0, S_{f=0})
    & \text{if } t = \pair{f, t_1, t_0}, S = S_{f=1} \uplus S_{f=0}, 
  \end{cases}
\end{align}
where
\begin{enumerate*}[label=(\roman*)]
\item $\ud{f}: \TS[L] \to \nat$ is the restriction of $f$ to $\TS[L]$, called the \textit{(leaf) labeling function}, and
\item $\circ:\nat^2\to \nat$ is a binary operator over $\nat$. 
\end{enumerate*}
Then, we say that $f$ is \textit{decomposable via operator} $\circ$, denoted $f = \HOM[\ud{f}, \circ]$. 
\end{definition}

\subsection{Constraint formulas and $\phi$-good models}
\label{subsec:prelim:constraint}
In the global analysis of decision trees, it is standard practice to investigate a subspace of the hypothesis space consisting of high-quality models, or \textit{good models}~\cite{rudin2022interpretable,Marx:ICML2020}, called the \textit{Rashomon set}, such that $size(t) \le s$ and $err(t, S) \le e$. We generalize this notion below. 
Let $\Delta, d \in \nat$ be any integers. We assume a $d$-vector $\vm \in \nat^d$ and a structure domain $\S = \TS$. We assume a $d$-vector of \textit{primitive metrics} $\vsigma = (\sigma_i)_d$ and the associated \textit{shape vector} $\vm = (m_i)_d \in \nat^d$, where $\sigma_i: \S \to [0..m_i]$ for each $i \in [d]$.

We introduce the \textit{syntax} of constraint formulas as follows. 
The \textit{vocabulary} with metrics $\vsigma$ includes constants for numbers in $\nat$ and $\rat$, nullary function variables $(\sigma_i)_d$, binary operators
$+, \times, (\frac{\cdot}{\cdot})$,
and a binary relation symbol $\le$.
Terms are rational expressions%
\footnote{
  A \textit{rational expression} is a fraction of multi-variate polynomials
  $\frac{P(x)}{Q(x)}$, where $x$ is a sequences of variables for functions.
  The set of rational functions is closed under addition, subtraction, multiplication, and division by non-zero functions~\cite{burgisser2013algebraic}. 
}
constructed from all constants, operators, and function variables $\ud{\vsigma}$.
An atomic formula is either  $(f(\ud{\vsigma})\le c)$ or $(f(\ud{\vsigma})\le g(\ud{\vsigma}))$ with rational expressions $f,g$ and constants $c$.
Then, the set of \textit{constraint formulas} consists of all formulas with free variables $\ud{\vsigma}$ constructed from atomic formulas $\phi[\ud{\vsigma}]$ using Boolean operations $\neg, \lor, \land$.
For example, the followings are constraint formulas with metrics,  
\begin{align}
  \phi_1 &=  (\size \le 4) \land (\op{error} \le 0.05\cdot N)
  \text{ with } acc \equiv (N_{1,1} + N_{0,0})/N, 
  \\
  \phi_2 &=  (\op{acc} \ge 0.9) \land (\op{f1}\ge 0.76)
  \text{ with } \op{f1} \equiv (2\cdot N_{1,1})/(2\cdot N_{1,1} + N_{0,0} + N_{1,0}), 
\end{align}
where $\phi_1$ states that $t$ is accurate and succinct on $S$, while $\phi_2$ states that $t$ has high scores in both accuracy and F1.
Metrics $acc, f1, \set{N_{i,j}}$ will be introduced in \cref{sec:ext:nonlinear:metrics}.

The \textit{semantics} is defined as follows.
We consider a pair $(t, S) \in S = \TS$ as a \textit{structure} over the vocabulary. 
Then, a $d$-metric vector $\vsigma = (\sigma_i)_d$ assigns to the structure $(t, S)$ a $d$-interger vector $\vec k = (k_i)_d = \vsigma(t, S) \in \nat^d$, called the \textit{index} of $t$, 
where $k_i = \sigma_i(t, S)$ for all $i$. 
Given a formula $\phi = \phi[\ud{\vsigma}]$, we write $(t, S)\models \phi$, if $\phi$ evaluates true under valuation $\ud{\vsigma} \mapsto \vsigma(t, S) \in \nat^d$ with the standard interpretation to logical connectives.
Then, we say that a decision tree $t \in \M$ is a \textit{$\phi$-good model} on $S$ if $(t, S)\models \phi$.
Note that the truth of $\phi$ on the structure $(t, S)$ is solely determined by the vector $\vec k = \vsigma(t, S)$ in $\nat^d$.
We define the \textit{hypothesis space of $\phi$-good models} by
\begin{math}
  \M(\phi, S) = \set{ t \in \M \mid  (t, S) \models \phi }. 
\end{math}



\subsection{Our problem: algebraic decision tree counting}
\label{subsec:problem}

From now on, we define our problem, $\pb{ADTC}$, the algebraic decision tree counting over a semiring $(R, +_R, \cdot_R, \Zero, \One)$. Let $\S = \TS$ be the domain of structures. 

\begin{definition}[Aggregation]\rm 
  An \textit{aggregation function} over $R$ is any function $\alpha: \S \to R$ that assigns an element $\alpha(t, S) \in R$ to each structure $(t, S) \in \S = \TS$.
\end{definition}

Then, a \textit{rank-$d$ query} is a tuple $Q = (\Delta, \alpha, \vsigma, \phi[\vsigma])$ consisting of
\begin{enumerate*}[label=(\roman*)]
\item an integer $\Delta \in \nat$,
\item an aggregation operator $\alpha$, 
\item a $d$-vector of primitive metrics $\vsigma$ with shape vector $\vm$, and
\item a constraint formula $\phi = \phi[\vsigma]$ over $\vsigma$. 
\end{enumerate*}
An \textit{input data} $D = (S, y)$ is a pair of a sample $S$ and a label function $y: S \to L$. We state our problem. 

\begin{definition}[$\pb{ADTC}$]\rm\label{def:problem:adtc} 
  The \textsc{Algebraic Decision Tree Counting} over a semiring $R$ is the problem of, 
  given a rank-$d$ query $Q = (\Delta, \alpha, \vsigma, \phi[\vsigma])$ and
  an input data $D = (S, y)$,
  computing the semiring element
  \begin{align} \label{eqn:adtc:main}
    \ADTC(Q, D)
    &=
      \sum_{\stk{t \in \sig M_\Delta:\: (t,S)\models \phi[\vsigma] }}
      \alpha(t, S)
    & \in R,  
  \end{align}
  that is, the \textit{summation of the aggregation value} $\alpha(t, S)$ with $+_R$ \textit{over all $\phi$-good trees} $t$ within $\M$ relative to a sample $S$. 
\end{definition}

By varying a semiring $(R, +_R, \cdot_R)$ and a query $Q$ as its components, the \pb{ADTC} problem can naturally formulate a wide range of global analytics tasks as follows.
Let $\op{one}(t, S) = \One$ be the aggregation function that always returns the constant $\One \in R$.

\begin{lemma}
  For any $\Sigma = (F, L)$, the framework \pb{ADTC} can solve the following tasks
  for the space $\M$ of decision trees
  by varying a semiring $R$ and a query $Q$ as follows: 
\begin{enumerate}[label=(\arabic*)]
\item The Boolean ring $(\bat, \lor, \land, 0, 1)$ with $\alpha = \op{one}$
  and $\phi_\idrm{good} =  (\mathtt{size}\!\le\! s) \land (\mathtt{error} \!\le\! e)$
  serves for deciding the existence of a small and accurate decision tree~{\rm \cite{nijssen:fromont:dl8:dmkd:2010}}.
  
\item The natural number ring $(\nat, +, \times, 0, 1)$ with $\alpha = \op{one}$ serves for counting all small and accurate decision trees on the arithmetic semiring on natural numbers~{\rm \cite{xin2022treefarm:exploring,arimura:osabe:uno2017ifors}}. 
  
\item The min-plus semiring $(\nat, \min, +, \infty, 0)$ serves for finding accurate tree minimizing the error $\alpha_\idrm{err}(t, S) := \op{error}(t, S)$~{\rm \cite{ruggieri2017enumerating,arimura:osabe:uno2017ifors}}.
  Remark that $\alpha_\idrm{err}$ is decomposable  as $\alpha_\idrm{err}(\pair{f, t_1, t_0}, S) = \alpha_\idrm{err}(t_1, S_{f=1}) + \alpha_\idrm{err}(S, t_0, S_{f=0})$ using $\cdot_R = +$.
\end{enumerate}
\end{lemma}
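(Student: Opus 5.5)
The plan is to unfold \cref{def:problem:adtc} for each choice of semiring and query, and to read off what the sum in \eqref{eqn:adtc:main} computes. Throughout, write $\M(\phi,S)$ for the set of $\phi$-good trees, so that $\ADTC(Q,D)=\sum_{t\in\M(\phi,S)}\alpha(t,S)$, where the sum is taken with $+_R$. The hypothesis space $\M$ is finite, since it has at most $n^{O(2^\Delta)}$ trees over a finite alphabet $\Sigma=(F,L)$. The sum is therefore a finite sum in a commutative monoid and is well defined, and no convergence or ordering issues arise in any of the three cases.

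\textbf{Case (1).} Take $R=(\bat,\lor,\land,0,1)$, $\alpha=\op{one}$, and $\phi=\phi_\idrm{good}$. The sum becomes $\bigvee_{t\in\M(\phi_\idrm{good},S)}1$. A finite disjunction of copies of $1$ equals $1$ exactly when the index set is nonempty, and the empty disjunction is $\Zero=0$. Hence $\ADTC(Q,D)=1$ iff there exists $t\in\M$ with $size(t)\le s$ and $err(t,S)\le e$, which is the decision problem.

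\textbf{Case (2).} Take $R=(\nat,+,\times,0,1)$ with $\alpha=\op{one}$. The sum becomes $\sum_{t\in\M(\phi,S)}1=|\M(\phi,S)|$. With $\phi=\phi_\idrm{good}$, this is the number of small and accurate trees.

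\textbf{Case (3).} Take $R=(\nat\cup\{\infty\},\min,+,\infty,0)$, $\alpha=\alpha_\idrm{err}$, and the chosen constraint $\phi$. The sum becomes $\min_{t\in\M(\phi,S)}err(t,S)$, with value $\infty$ if no tree qualifies. This is exactly the optimal error over the admissible trees.

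For the remark on decomposability, I will verify that $err=\HOM[\ud{err},+]$ in the sense of the decomposable-metrics definition, by structural induction on $t$.
\begin{itemize}
\item \emph{Leaf.} For a leaf $y$, we have $err(y,S)=\sum_{x\in S}\One[y\neq y(x)]$, which is the leaf labeling function.
\item \emph{Internal node.} For $t=\pair{f,t_1,t_0}$, the root-to-leaf traversal sends each $x\in S_{f=i}$ into $t_i$, so $f_t(x)=f_{t_i}(x)$ on $S_{f=i}$. Since $S=S_{f=1}\uplus S_{f=0}$ is a disjoint union, the sum of indicators splits as $err(t_1,S_{f=1})+err(t_0,S_{f=0})$.
\end{itemize}
The operator $+$ makes $(\nat,+,0)$ a monoid, as the definition requires, and it coincides with $\cdot_R$ of the min-plus semiring.

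The main obstacle is not mathematical depth but matching the statement's loose wording precisely. First, the min-plus carrier must include $\infty$, so I will silently take $\nat\cup\{\infty\}$ to get the absorbing zero. Second, case (3) needs a constraint formula to be fixed, either $\phi_\idrm{good}$ or the trivially true formula. Third, the displayed identity contains a typo, $\alpha_\idrm{err}(S,t_0,S_{f=0})$, which I will read as $\alpha_\idrm{err}(t_0,S_{f=0})$. Finally, the statement says ``ring'' where it means semiring. The decomposability check is the only step requiring an actual induction, and it is routine once the disjointness of the split is invoked.
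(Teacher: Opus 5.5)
Your proposal is correct and is essentially the paper's argument made explicit. The paper's proof is a one-line sketch deferring to the AMC literature, and your case-by-case unfolding of the sum, together with the one-step check that $err$ splits additively over $S=S_{f=1}\uplus S_{f=0}$, is exactly that straightforward verification. One point beyond the wording issues you already flag: in case (3) you show that \pb{ADTC} returns the optimal error value, not a minimizing tree, and producing such a tree would need a tree-carrying extension like the one in \cref{subsec:method:sampling}, a distinction the paper's informal statement also leaves implicit.
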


\begin{proofsketch}
  The proof is straightforward by discussions similar to  
  \cite{goral:giesen:blacher:staudt:kaus:aaai2024model,eiter2023semiring}.
  \qed 
\end{proofsketch}
  
We remark that the complexity of \pb{ADTC} depends crucially on an underlying semiring $R$. To be precise, we introduce the parameter $t_R$ and $s_R$ to be the \textit{worst-case time} and \textit{space complexities} for operations on $\R$. We observe that $\pb{ADTC}$ can be solved by a straightforward method according to Eq.\eqref{eqn:adtc:main} as follows: it first initializes a variable $r = \Zero \in R$, then, scans all trees $t$ in $\M$, where at each iteration, it evaluates $\vsigma(t, S)$, and adds the weight $\alpha(t, S) \in R$ to $r$  if $(t, S) \models \phi[\vsigma]$ holds. However, since $|\M| = n^{\Theta(2^\Delta)}$, this method requires
doubly exponential time in $\Delta$.


\section{Efficient Algorithm}
\label{sec:method}\label{sec:algo}

This section presents a dynamic programming approach solving \pb{ADTC} in $O^*(n^{O(\Delta)} \cdot t_R)$ time and $O^*(s_R)$ space over $\sig M_\Delta$.
We first formalize decomposability of metrics (Sec.~3.1), 
and then develop the \proc{EMT} algorithm for unconstrained \pb{ADTC} (Sec.~3.2).
Reducing the tensor construction \pb{MT} to this unconstrained setting (Sec.~3.3) yields our final algorithm for the general constrained \pb{ADTC} problem (Sec.~3.4).

\subsection{Assumptions on analytic queries} 
\label{subsec:mathod:notation}

Let $(R, +_R, \cdot_R, \Zero, \One)$ be a semiring and $(\A, \vec\circ, \One)$ be a $d$-vector $\set{(A_i, \circ_i, 1_i)}_d$ of monoids.
Throughout, we assume any rank-$d$ query $Q = (\Delta, \alpha, \vsigma, \phi[\vsigma])$ satisfies two conditions:
\begin{enumerate}[label=(\roman*)]
  \item The aggregation $\alpha: \S \to R$ is decomposable via $\cdot_R$, that is, $\alpha = \HOM[\ud{\alpha}, \cdot_R]$.
  \item The metrics $\vsigma: \S \to \nat[\vm]$ with shape $\vm \in \nat^d$ is decomposable via $\vec\circ$, that is, $\vsigma = \HOM[\ud{\vsigma}, \vec\circ]$.
\end{enumerate}
When the underlying algebraic structures are clear, we simply write $\alpha$ and $\vsigma$ to denote the \textit{decomposable schemas} $(\alpha, R)$ and $(\vsigma, \A)$, respectively.


\subsection{\pb{ADTC} without constraints}
\label{sec:algo:wo:constraint}

First, we present our basic algorithm \proc{EMT} for unconstrained queries of the form $Q = (\Delta, \alpha)$, where metrics $\vsigma$ and constraint formula $\phi[\vsigma]$ are empty. 
Although such restricted queries seem useless in practice, they will turn out to be useful as an important building block of a general $\pb{ADTC}$ algorithm of \cref{sec:method:mtensor} and \cref{sec:method:together}.

\begin{definition}[Algorithm {\rm \proc{EMT}}]\rm
  \label{def:algo:emt}
  We assume an unconstrained rank-$d$ query
  $Q = (\Delta, \alpha)$
  and
  an input data $D = (S, y)$. 
  Then, the procedure $\proc{EMT}$ is defined by the recurrence below by induction on $\Delta\ge 0$, 
  where arguments $S$ and $F$ are any subsets of initial a sample $S_0$ and a feature set $F_0$:
  \begin{enumerate}
  \item In the case with $\Delta = 0$, we let 
    \begin{math}
      \proc{EMT}(0, S, F) = 
     \left(
        \sum_{\ell \in L}
        \ud{\alpha}(\ell, S)
     \right). 
    \end{math}
  \item In the case with $\Delta \ge 1$, we let 
   \begin{flalign}
     \proc{EMT}(\Delta, S, F)
     & \; =\;
     \proc{EMT}(0, S, F)
     \;+_R\;       
     \left(
     \sum_{f \in F}
     \left(\begin{array}{l}
       \proc{EMT}(\Delta-1, S_{f=1}, F\setmn{f}) \\ \cdot_R\;
       \proc{EMT}(\Delta-1, S_{f=0}, F\setmn{f})
     \end{array}\right)
       \right).
   \end{flalign}   
  \end{enumerate}
\end{definition}

\begin{algorithm}[t]
\caption{
  The recursive procedure that, given an unconstrained query $Q = (\Delta, \alpha)$ and an input $D = (S, y)$, solves the unconstrained \textsc{Algebraic Decision Tree Counting} problem  for the class $\M$ of decision trees over alphabets $\Sigma = (F, L)$ and a semiring $(R, +_R, \cdot_R, \Zero, \One)$.
  It recursively searches the subspace of $\M$ specified by a triple $(\Delta, S, F)$ consisting of a maximum depth $\Delta$, a sample $S$, and a feature set $F$. 
}\label{algo:emt}
\textbf{Procedure} $\proc{EMT}(\Delta, S, F)$\tcp*{Process path $P$}
\Begin{
    $\alpha = \Zero$\;
    \For{each label $\ell \in L$ }{
      $\alpha_\idrm{leaf} \gets \One$\; 
      \iFor{ each $x \in S$ }{
        $\alpha_\idrm{leaf} \gets (\alpha_\idrm{leaf} \cdot_R \ud{\alpha}(\ell, x))$\; 
      }
      $\alpha \gets \alpha +_R \alpha_\idrm{leaf}$\;
    }
    \If{ $\Delta = 0$ }{
      \Return $\alpha$\; 
    }
    \For{each feature $f \in F$ }{
      $\alpha_1 \gets \proc{EMT}(\Delta-1, S_{f=1}, F\setminus\set{f})$\tcp*{Process path $P\cdot\nd{f=1}$}
      $\alpha_0 \gets \proc{EMT}(\Delta-1, S_{f=0}, F\setminus\set{f})$\tcp*{Process path $P\cdot\nd{f=0}$}
      $\alpha \gets \alpha +_R (\alpha_1 \cdot_R \alpha_0)$\; 
    }
    \Return $\alpha$\; 
}
\end{algorithm}


In \cref{algo:emt}, we present the pseudocode of \proc{EMT} that implements the recurrence of Def.~\ref{def:algo:emt}, where we assume that
$\alpha$ is \textit{homomorphic} on data, i.e., 
\begin{math}
  \ud{\alpha}(\ell, S) =
  \bigmul_{x \in S} \ud{\alpha}(\ell, \set{x})
\end{math}
as it is true with most $\alpha$ in this paper. 
In the top-level, given $\Sigma = (F, L)$ and an input $D = (S, y)$,
the invocation of 
$\pb{EMT}(\Delta, S, F)$ computes a semiring element $v \in R$. 
By the distributivity of $\cdot_R$  over $+_R$, we can show the next lemma.

\begin{lemma}[Correctness of \proc{EMT}]\label{lem:emt:correctness}
Given an unconstrained rank-$d$ query $Q = (\Delta, \alpha)$ and an input data $D = (S, y)$, the procedure $\proc{EMT}$ solves the unconstrained $\pb{ADTC}$ problem. 
\end{lemma}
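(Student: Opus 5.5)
We prove by induction on $\Delta$ the stronger claim that, for all subsets $S\subseteq S_0$ and $F\subseteq F_0$,
\begin{align}
  \proc{EMT}(\Delta, S, F) \;=\; \sum_{t \in \sig M_\Delta(F)} \alpha(t, S) \label{eq:emt:inv}
\end{align}
holds in $R$. Here $\sig M_\Delta(F)$ denotes the set of decision trees of depth at most $\Delta$ over $(F,L)$ in which no feature repeats along a root-to-leaf path. This restriction is implicit in the recursion through $F\setmn{f}$. It loses nothing semantically, because re-testing $f$ below $f$ yields an empty branch. We take $\sig M_\Delta$ in \cref{def:problem:adtc} to mean this space. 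Taking $S=S_0$, $F=F_0$ and an empty constraint $\phi$ then gives exactly $\ADTC(Q,D)$.

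\textbf{Base case} $\Delta=0$. Here $\sig M_0(F)=L$. By decomposability, $\alpha = \HOM[\ud{\alpha},\cdot_R]$, so $\alpha(\ell,S)=\ud{\alpha}(\ell,S)$. Hence the right-hand side of \eqref{eq:emt:inv} is $\sum_{\ell\in L}\ud{\alpha}(\ell,S)=\proc{EMT}(0,S,F)$. For the pseudocode we additionally use the homomorphic assumption $\ud{\alpha}(\ell,S)=\bigmul_{x\in S}\ud{\alpha}(\ell,\set{x})$, which is what the inner loop computes.

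\textbf{Inductive step} $\Delta\ge1$. We partition $\sig M_\Delta(F)$ disjointly into the leaves $L$ and, for each $f\in F$, the composite trees $\pair{f,t_1,t_0}$ with $t_1,t_0\in\sig M_{\Delta-1}(F\setmn{f})$. The map $(f,t_1,t_0)\mapsto\pair{f,t_1,t_0}$ is a bijection onto the composite part, since trees are syntactic expressions. The leaf part contributes $\proc{EMT}(0,S,F)$. For the composite part, decomposability gives
\[
\alpha(\pair{f,t_1,t_0},S)=\alpha(t_1,S_{f=1})\cdot_R\alpha(t_0,S_{f=0}).
\]
Summing over the product set and using distributivity of $\cdot_R$ over $+_R$ (finite sums, with commutativity and associativity of $+_R$), we obtain
\[
\sum_{t_1,t_0}\alpha(t_1,S_{f=1})\cdot_R\alpha(t_0,S_{f=0})=\Bigl(\sum_{t_1}\alpha(t_1,S_{f=1})\Bigr)\cdot_R\Bigl(\sum_{t_0}\alpha(t_0,S_{f=0})\Bigr).
\]
By the induction hypothesis, applied to the subsets $S_{f=i}\subseteq S_0$ and $F\setmn{f}$, this equals the product of the two recursive calls. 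Summing over $f$ and adding the leaf part reproduces the recurrence of \cref{def:algo:emt}.

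\textbf{Main obstacle.} The algebra is routine; the delicate step is the bijection between $\sig M_\Delta(F)$ and the index set of the recursion. The sum must count every tree exactly once, which requires distinct triples $(f,t_1,t_0)$ to give distinct trees and leaves to be disjoint from composite trees. It must also respect the child-order convention $\pair{f,t_1,t_0}$ and the depth bookkeeping, namely that depth at most $\Delta$ means both children have depth at most $\Delta-1$. Finally, we must make explicit the convention that $\sig M_\Delta$ excludes repeated features on a path, or else observe that the paper's $\sig M_\Delta$ is defined relative to this $F$-shrinking recursion. Since $+_R$ is idempotent only in some semirings, such as $\bat$ or min-plus, overcounting would genuinely change the answer, for example in $\nat$. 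So this counting argument has to be stated carefully rather than absorbed into the distributivity step.
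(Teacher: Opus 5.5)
Your proof is correct and follows the paper's route: induction on $\Delta$, splitting the tree space into leaves and composite trees $\langle f,t_1,t_0\rangle$, then using decomposability of $\alpha$ with left- and right-distributivity to factor the double sum into the product of the two recursive calls, and closing with the induction hypothesis. You also tighten two points the paper leaves loose: you make explicit that the space excludes repeated features along a path (which is what the $F\setminus\{f\}$ recursion actually sums over, and which the paper's proof only assumes implicitly), and you let $t_1,t_0$ range over all of $\mathcal{M}_{\Delta-1}(F\setminus\{f\})$ rather than only over composite trees as the paper writes.
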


\begin{proof}
  In what follows, we write the solution $a(\M, S, F)$ of $\pb{ADTC}$
  by emphasizing its dependency on $S, F$ in the recurrence. 
  Then, we show the claim that   $a(\M[\Delta], S, F)$ coincides with
  the return value $\proc{EMT}(\Delta, S, F)$ of the algorithm (*1). 

  (1) First, we suppose that $\Delta = 0$. Since $\M[0] = L$ and $\phi = 1$, we have the equations 
  \begin{align}
    & a(\M[0], S, F)
      \;=\; \sum_{\ell \in L}
      \ud{\alpha}(\ell, S)
    \;=\; \proc{EMT}(0, S, F), 
  \end{align}
  where the first equality follows from the base case of $\alpha$ with $(\cdot_R, \ud{\alpha})$, and the second equality follows by the definition of $\proc{EMT}$. 
  

  (2) Next, suppose that $\Delta \ge 1$ and the claim holds for all $\Delta' \le \Delta - 1$. Let 
  \begin{math}
    A := a(\M, S, F)
    = \sum_{t \in \M[0:\Delta]} \alpha(t, D)
  \end{math}
  be the solution. 
  Since $\M$ can be split as $\M = \M[1:\Delta] \uplus \M[0]$ such that $\M[1:\Delta]$ is the subset consisting of all composite trees, if we define
  \begin{align}\label{eq:1}
    A_0 &:= \sum_{t \in \M[0]} \alpha(t, D) = \proc{EMT}(0, S, F),
   \qquad 
   A_{1:\Delta} :=  \sum_{t \in \M[1:\Delta]} \alpha(t, D), 
  \end{align}
  then the solution $A$ equals the summation
  of $\alpha(t, S)$ over all trees $t$ in $\M$, we can decompose  $A$ into the sum $A = A_0\uplus A_{1:\Delta}$ of values $A_0$ and $A_{1:\Delta}$,
  where
  $\M[0] = L$. 
  We see that any pair $(t, S)$ of a composite tree $t = \nd{f, t_1, t_0} \in \M[1:\Delta]$ and a sample $S$ can be decomposed into smaller problems $(t_1, S_{f=1})$ and $(t_0, S_{f=0})$, where $S  = S_{f=1} \uplus S_{f=0}$.
  Applying the distributivity of $\cdot_R$ over $+_R$, we obtain the following derivation: 
  \begin{align}
    A_{1:\Delta}
    &= \sum_{t \in \M[1:\Delta]} \alpha(t, D, F)
  =
\makebox[10mm][l]{$
  \sum_{f \in F}
  \sum_{t_1}
  \sum_{t_0}
  \left(
  \begin{array}{l}
    \alpha(t_1, S_{f=1}, F\setmn f)
    \\ \cdot_R\; \alpha(t_0, S_{f=0}, F\setmn f)
  \end{array}
  \medstrut\right)
  \label{eqn:dist:one}
$}
  \\&= \sum_{f \in F} \sum_{t_1}  
  \left(\alpha(t_1, S_{f=1}, F\setmn f) \cdot_R C_0
  \rule{0pc}{1.0pc}\right)
  &\makebox[3cm][r]{$\because$ left-distributivity of $\cdot_R$}
  \\&= \sum_{f \in F} 
  \left(C_1 \cdot_R C_0 \right)
  &\makebox[3cm][r]{$\because$ right-distributivity of $\cdot_R$}
  \\
  &= \sum_{f \in F} 
  \left(
  \begin{array}{l}
  \proc{EMT}(\Delta-1, S_{f=1}, F\setmn f)
  \\ \cdot_R\; 
  \proc{EMT}(\Delta-1, S_{f=0}, F\setmn f)
  \end{array}
  \medstrut\right)
  &\makebox[3cm][r]{$\because$ induction hypothesis}
  \label{eq:emt:corr:ind:two}  
  \end{align}
  where $t_1, t_0$ range over composite trees in $\M[d-1]$, 
  $C_0 := \sum_{t_0} \alpha(t_0, S_{f=0})$, 
  and 
  $C_1 := \left(\sum_{t_1} \alpha(t_1, S_{f=1}) \right)$.
  As seen above, the left- and right-distributivities are used. 
  The last line follows from the induction hypothesis.
  Hence, the lemma is proved. 
\qed 
\end{proof}

Now, we show the first theorem. 
Recall that $O^*$ notation hides polynomial factors. 

\begin{theorem}[Complexity of $\pb{ADTC}$ without constraint] \label{thm:algo:emt:unconst}
  Let $(R, +_R, \cdot_R)$ be any semiring. 
  Given an unconstrained rank-$d$ query $Q = (\Delta, \alpha)$ and an input data $D = (S, y)$,
  the $\pb{ADTC}$ problem can be solved in 
  $O^*(n^{O(\Delta)}\cdot t_R)$ time and
  $O^*(s_R)$ space. 
\end{theorem}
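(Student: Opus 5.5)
Correctness is already settled by Lemma~\ref{lem:emt:correctness}: the top-level call $\proc{EMT}(\Delta, S, F)$ returns exactly $\ADTC(Q,D)$ for the unconstrained query. So the theorem reduces to bounding the time and space of Algorithm~\ref{algo:emt}. I will do this by analysing its recursion tree. I will not try to memoise subproblems, since that would cost space beyond $O^*(s_R)$.

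\textbf{Time.} Let $T(\Delta, k)$ be the number of semiring operations performed by a call with depth bound $\Delta$ and $|F| = k \le n$, excluding work done inside recursive calls.

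The local work is as follows. The leaf loop uses $O(|L|\cdot|S|)$ semiring operations and $|L|$ evaluations of $\ud\alpha$. Computing the splits $S_{f=1}, S_{f=0}$ for all $f\in F$ costs $O(n|S|)$ ordinary operations. The combination step adds $O(n)$ further semiring operations. Each call also makes at most $2|F| \le 2n$ recursive calls with depth bound $\Delta-1$.

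This gives the recurrence
\[
T(\Delta) \le 2n\,T(\Delta-1) + O\bigl((|L| + n)|S|\bigr)\, t_R, \qquad T(0)=O(|L||S|)\,t_R .
\]
Unrolling it, the recursion tree has at most $\sum_{i=0}^{\Delta}(2n)^i = O((2n)^{\Delta})$ nodes. Each node does polynomial work times $t_R$ (here $|S|=m$, $|L|=c$ and $n$ are treated as the polynomial input size). The total is therefore $O^*((2n)^\Delta t_R) = O^*(n^{O(\Delta)} t_R)$, since $2^\Delta \le n^{\Delta}$ for $n\ge 2$.

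\textbf{Space.} The recursion is depth-first. At any moment the stack holds at most $\Delta+1$ frames, because $\Delta$ decreases by one per level. Each frame stores:
\begin{itemize}
\item a constant number of semiring values ($\alpha$, $\alpha_{\mathrm{leaf}}$, $\alpha_1$, $\alpha_0$), costing $O(s_R)$;
\item the current subsample and feature subset, costing $O(m+n)$ words.
\end{itemize}
The frame count stays polynomial in the input because the trees $t$ are never materialised; only per-frame semiring values are held. The total space is $O(\Delta(m+n) + \Delta s_R) = O^*(s_R)$. Here $\Delta \le n$ without loss of generality, since features along a path are distinct, so $\Delta$ is absorbed into the polynomial factors.

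\textbf{Main obstacle.} The only subtle point is the space bound. One must make sure intermediate results are combined immediately, as in $\alpha\gets\alpha+_R(\alpha_1\cdot_R\alpha_0)$, rather than stored for all $f$. It also requires that the splits can be represented implicitly, for example as index lists regenerated per call, so that no frame exceeds polynomial size. I will state explicitly the assumption that $\ud\alpha(\ell,\{x\})$ is computable in polynomial time with output of size $O(s_R)$.
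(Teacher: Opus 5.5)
Your proposal is correct and follows essentially the same route as the paper. The paper bounds the number of invocations by the at most $(2|F|)^\Delta|L|$ decision paths in $(F\times\{1,0\})^*$, charges semiring-operation cost to each, and bounds space by the $O(\Delta)$ stack frames, each holding a few semiring values. Your version is somewhat more careful: it makes the polynomial per-call work (the leaf loop over $S$ and the splits) explicit, and it reaches $O^*(s_R)$ space directly rather than through the paper's tensor-phrased bound.
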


\begin{proof}
  The correctness follows from Lemma~\ref{lem:emt:correctness}.
  For the time complexity, we observe that every iteration $X$ of the procedure is specified by the \textit{unique decision path} $P$ in $(F\by\set{1,0})^*$ shown in Algorithm~\ref{algo:emt}.
  From this, we see that there are at most $(2|F|)^\Delta |L| \le 2^\Delta n^\Delta l$ distinct decision paths of length $\Delta$. Since we can charge $O(t_R)$ time for operating on elements of $R$ to each iteration $X$, we see that the running time is bounded by $O((2^\Delta n^\Delta l)t_R)$.
  Furthermore, at each intermediate iteration $X$ of depth $\Delta' \le \Delta$, the procedure stores at most $2 \Delta'$ tensors on a stack, each of which occupies $O(N^{ck} s_R)$ space. Hence, the total space is $O(\Delta N^{ck} s_R)$. This shows the theorem. 
\qed   
\end{proof}

From \cref{thm:algo:emt:unconst}, we see that the problem $\pb{ADTC}$ is computable in polynomial time w.r.t.~the size of input $D$ when a query $Q$ is regarded as constant.



\subsection{\pb{ADTC} over a semiring of model behavior tensors}
\label{sec:method:mtensor}

Next, we present an efficient algorithm for the \textsc{Model Behavior Tensor} problem.
Intuitively, a rank-$d$ \textit{model behavior tensor} (or \textit{model tensor}, MT) is a $d$-way cross table of aggregation values via $\alpha$ on $\M$ w.r.t.~multiple metrics $\vsigma$ (see Fig.~\ref{fig:dashboard:size_dep}(a)).
An \textit{MT query} means any rank-$d$ query $Q = (\Delta, \alpha, \vsigma)$ with the empty constraint $\phi[\vsigma]$. 


\paragraph{Behavior of  decision trees as tensors.}
Let $\Delta, d \in \nat$ be any integers, and let 
$(R, +_R, \cdot_R, \Zero, \One)$ be any semiring.
Let $Q = (\Delta, \alpha, \vsigma, \phi[\vsigma])$ be any rank-$d$ (analytic) query.
Recall that $\S = \TS$ is the domain of structures. 
By the assumptions in \cref{subsec:mathod:notation}, we assume without loss of generality that the following conditions hold: 
\begin{itemize}
\item
(1)  The aggregation function $\alpha: \S \to R$ has a decomposition scheme $(\alpha, R)$ such that $\alpha = \HOM[\ud{\alpha}, \cdot_R]$ via operator $\cdot_R$ using a submonoid $(R, \cdot_R, \One)$ of $R$.
\item
(2) The combination function $\vsigma = (\sigma_i)_d: \S \to \nat[\vm]$, a $d$-vector of metrics with a shape vector $\vm  = (m_i)_d \in \nat^d$, has a decomposition scheme $(\vsigma, \A)$ such that $\vsigma = \HOM[\ud{\vsigma}, \vec\circ]$ via operator $\vec\circ$ using a $d$-vector of monoids $(\A, \vec\circ, \One)$, which denotes $\set{ (A_i, \circ_i, 1_i) }_d$.  
\end{itemize}

Under the above assumption,
we define the \textit{behavior} of each structure $\tau = (t, S) \in \S$ to be the \textit{index-weight pair} $(\vec k, v)$ computed by 
$\vec k = \vec{\sigma}(t, S) \in \nat[\vm]$ and
$v = \alpha(t, S) \in R$. 
Recall that a tensor $T$ in $R^{\nat[\vm]}$ is just a function $T: \nat[\vm] \to R$ that assigns an element $T(\vec k)$ in $R$ to each index (a point) $\vec k$ in the $d$-dim discrete space $\nat[\vm]\subseteq \nat^d$. 
Now, we state the second problem of this paper as follows. 

\begin{definition}[{\pb{MT}} problem]\rm\label{def:modeltensor}
Let $(R, +_R, \cdot_R, \Zero, \One)$ be any semiring and $(\A, \vec\circ, \One)$ be a $d$-vector of monoids. 
The \textsc{Model Behavior Tensor} problem over $R$ (\pb{MT}) is the problem of,
given any rank-$d$ model tensor query $Q' = (\Delta, \alpha, \vsigma)$ with decomposition schema $(\alpha, R)$ and $(\vsigma, \A)$, and input data $D = (S, y)$, 
computing the rank-$d$ tensor $T \in R^{\nat[\vm]}$
such that for every $\vec k \in \nat[\vm]$, the entry $T(\vec k)$ is defined as 
\begin{align}
  T(\vec k)
  &=
    \sum_{\substack{
    t \in \sig M_\Delta
  \\(t, S)\models \phi, \; 
  \vec k = \vsigma(t, S)
  }}
  \alpha(t, S)
  & \in R, 
\end{align}
where
$\vsigma(t, S)$ and $\alpha(t, S)$ are the
index and value of a structure $(t, S)$ in $\S$, respectively,
and $\sum$ is the summation $+_R$ over $R$.
Then, the tensor $T$ is called the \textit{model behavior tensor} (or a \textit{model tensor}) for $(Q', D)$ and denoted by $\pb{MT}[Q, D]$.
\end{definition}

A key to efficient algorithms for the problem is a set of operations over model behavior tensors, introduced as follows.

\begin{definition}[Tensor operations]\rm 
  We define the following operations. 
  Let $\vm \in \nat^d$ be any $d$-vector of integers, and $\S$ be any aggregation scheme with dimension $\vm$. 
  \begin{enumerate}
  \item The \textit{zero tensor} $\Zero$ that has zero $0_R$ everywhere, i.e., $\Zero(\vec k) = 0_R$  for all $\vec k$.
  \item The \textit{unity tensor} $\One$ that has a unity $1_R$ at
    the zero vector 
    $(0_i)_d$ and zero $0_R$ elsewhere. 
  \item The \textit{singleton tensor} $T = [\vec i = \vec k] v$, in \textit{Iverson's notation} with variable $\vec i$, for a tree-data pair $(t, S)$ that holds a semiring value $v = \alpha(t, S)$ at point $\vec k = \vsigma(t, S)$. 
  \item The point-wise addition $T_1\oplus T_0$ such that at every point $\vec k \in \nat[\vm]$, $(T_1\oplus T_0)(\vec k) = T_1(\vec k) +_R T_0(\vec k) \in R$.
    
  \item The (truncated) convolution product $T_1\otimes T_0$ such that at every point $\vec k \in \nat[\vm]$,
    \begin{align}
      (T_1\otimes T_0)(\vec k)
      &=
      \sum_{\substack{
      \vec k_1, \vec k_0 \in \nat[\vm]: \: 
      \vec k_1 \vec\circ\: \vec k_0 = \vec k
      }}
      T_1(\vec k) \cdot_R T_0(\vec k)
      &\in R, 
    \end{align}
    where $\vec k_1, \vec k_0$ range over $\nat[\vm]$, $\sum$ takes $+_R$ on $ R$, and $\vec\circ$ operates on $\nat[\vm]$. 
  \end{enumerate}
\end{definition}

In this way, we obtain from any semiring $(R, +_R, \cdot_R, 0_R, 1_R)$
the algebraic structure $(R^{\nat[\vm]}, \oplus, \otimes, \Zero, \One)$
over the collection $R^{\nat[\vm]}$ of rank-$d$ tensors from $\nat[\vm]$ to $R$.

\paragraph{Decision trees as polynomials.}
Any model tensor
$T \in R^{\nat[\vm]}$ maps to a multivariate truncated polynomial $P(x) = \sum_{\vec{k} \in \nat[\vm]} T(\vec{k}) x^{\vec{k}}$ over $R$, where $x = (x_i)_d$ are indeterminates. This establishes a bijection between $R^{\nat[\vm]}$ and the quotient semiring
$R[(x_i)_d] / \pair{x^{m_j}}\idx j1d$
bounding each degree to $m_i - 1$.%
\footnote{
  Unlike cyclic polynomials $R[(x_j)_d]/\pair{x^{m_j}-1}\idx j1d$ typical in signal processing, we use truncated polynomials $R[(x_j)_d]/\pair{x^{m_j}}\idx j1d$.
}
Division in $R$ is not required.

\begin{lemma}[Folklore]\label{lem:isomorphic:polynomials:tensors:truncated}
  Under the above correspondence, the set of $d$-variate truncated polynomials modulo
  $\pair{x^{m_j}}\idx j1d$
  and the set of rank-$d$ tensors with size $\vm$ are isomorphic.
  Specifically, the addition and product of polynomials coincide with
  the element-wise addition
  $T_1 \oplus T_2$ and the truncated convolution $T_1 \otimes T_2$ of tensors, respectively.
\end{lemma}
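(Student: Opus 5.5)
The plan is to define the map explicitly and verify that it is a bijection and a homomorphism for both operations. Let $\Phi: R^{\nat[\vm]} \to R[(x_i)_d]/\pair{x^{m_j}}\idx j1d$ send a tensor $T$ to the class of $P_T(x) = \sum_{\vec k} T(\vec k)\, x^{\vec k}$. I will index tensor entries by exponent vectors $\vec k$ with $0 \le k_j \le m_j - 1$, so that they match the degree range, and take $\vec\circ$ to be componentwise addition of exponents, truncated at $\vm$. This is the reading under which the statement is meaningful; for a general monoid $\vec\circ$ the correspondence should instead be with the monoid semiring.

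For bijectivity, I would use the fact that the ideal generated by the monomials $x_j^{m_j}$ is spanned, as an $R$-semimodule, by exactly those monomials $x^{\vec k}$ having some $k_j \ge m_j$. Hence every class has a unique representative supported on the monomials $x^{\vec k}$ with $\vec k$ in the index box. Working over a semiring, where subtraction is unavailable, I would make this precise by defining the quotient as the set of reduced polynomials, with multiplication followed by discarding every monomial that has an exponent $k_j \ge m_j$. Under that definition the map $T \mapsto P_T$ is obviously bijective, with inverse given by reading off coefficients.

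For the operations, additivity is immediate: the coefficient of $x^{\vec k}$ in $P_{T_1} + P_{T_0}$ is $T_1(\vec k) +_R T_0(\vec k)$. For the product, I expand $P_{T_1}P_{T_0}$ using distributivity in $R$ and the commutation of indeterminates with coefficients, which holds because $R$ is commutative. This gives $\sum_{\vec k_1,\vec k_0} (T_1(\vec k_1)\cdot_R T_0(\vec k_0))\, x^{\vec k_1+\vec k_0}$. Collecting the terms with a common exponent $\vec k$ and discarding those outside the box yields exactly $(T_1\otimes T_0)(\vec k)$. Here I read the summand in the definition as $T_1(\vec k_1)\cdot_R T_0(\vec k_0)$, correcting an apparent typo. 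Finally, the zero tensor maps to $0$, and the unity tensor, whose single nonzero entry $1_R$ sits at $\vec 0$, maps to the constant polynomial $1$.

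The main obstacle I expect is the truncation step: showing that reduction modulo the monomial ideal is compatible with multiplication, i.e.\ that discarding high-degree terms before or after multiplying gives the same result. I would handle this by observing that exponents only increase under multiplication, so any monomial with some $k_j \ge m_j$ produces only such monomials when multiplied. The truncated product is therefore well defined and associative, and this is what makes the isomorphism, and hence the semiring structure on tensors, go through.
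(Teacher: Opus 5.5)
The paper gives no proof of this lemma; it is labelled folklore, so there is no argument of the paper's to compare against. Your proposal is the standard argument, and it is correct.

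It also repairs three inconsistencies the paper leaves implicit:
\begin{itemize}
\item You re-index the box as $0\le k_j\le m_j-1$. The paper's $\mathbb{N}[\vec m]=[m_1]\times\cdots\times[m_d]$ starts at $1$, which clashes both with truncation modulo $x_j^{m_j}$ and with placing the unity tensor at $\vec 0$.
\item You read the convolution summand as $T_1(\vec k_1)\cdot_R T_0(\vec k_0)$, which fixes a typo.
\item You observe that the correspondence with polynomials needs $\vec\circ$ to be componentwise addition of exponents. For a general monoid one gets a truncated monoid semiring instead.
\end{itemize}

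Two small refinements. First, you need not simply \emph{define} the quotient to be the set of reduced polynomials; this can be justified. The ideal $I$ generated by the $x_j^{m_j}$ is exactly the set of polynomials supported outside the box. The usual semiring quotient by $I$ is given by the Bourne congruence: $p\sim q$ iff $p+i=q+j$ for some $i,j\in I$. This relation holds precisely when the in-box coefficients of $p$ and $q$ agree. Indeed, adding an element of $I$ never changes in-box coefficients. Conversely, if the in-box parts agree, take $i$ to be the out-of-box part of $q$ and $j$ the out-of-box part of $p$. So your reduced-polynomial model coincides with the quotient. Your closing observation, that $I$ absorbs multiplication by arbitrary polynomials, is exactly what makes truncation compatible with the product.

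Second, commutativity of $R$ is not needed for the product formula $\sum_{\vec k_1,\vec k_0}(T_1(\vec k_1)\cdot_R T_0(\vec k_0))\,x^{\vec k_1+\vec k_0}$. The indeterminates commute with coefficients by construction of the polynomial semiring. Commutativity of $R$ is needed only to conclude that the resulting tensor semiring is commutative, as in the parenthetical version of the Lifting lemma.
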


From Lemma~\ref{lem:isomorphic:polynomials:tensors:truncated}, the next lemma follows. 
\begin{lemma}[Lifting lemma]
  If $(R, +_R, \cdot_R, 0_R, 1_R)$ is a (commutative) semiring, then the structure 
  \begin{math}
(R^{\nat[\vm]}, \oplus, \otimes, \Zero, \One)
\end{math}
is a (commutative) semiring.
\end{lemma}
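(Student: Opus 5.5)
The plan is to transport the semiring structure along the correspondence of \cref{lem:isomorphic:polynomials:tensors:truncated}, which sends a tensor $T$ to $P_T(x)=\sum_{\vec k\in\nat[\vm]}T(\vec k)\,x^{\vec k}$. First, the polynomial ring $R[(x_j)_d]$ is a (commutative) semiring whenever $R$ is. This is the standard fact, verified coefficientwise. Associativity of multiplication comes from rewriting $\sum_{\vec a+\vec b+\vec c=\vec k}$ in two ways, using associativity of $\cdot_R$ and distributivity in $R$. Absorption by $\Zero$ and the unit $x^{\vec 0}$ are immediate. Commutativity of the product follows from that of $\cdot_R$ together with the symmetry of $\vec a+\vec b=\vec k$.

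Second, I pass to the truncated quotient. Let $I$ be the set of polynomials all of whose monomials $x^{\vec k}$ have $k_j\ge m_j$ for some $j$. This $I$ is closed under $+$ and absorbs multiplication, since exponents only grow under multiplication. The quotient $R[(x_j)_d]/I$ is therefore a well-defined semiring. The congruence is "agree on all coefficients with $\vec k$ inside the box", and it respects both operations because the box coefficients of a product depend only on the box coefficients of the factors. Representatives are exactly the truncated polynomials. By \cref{lem:isomorphic:polynomials:tensors:truncated}, $\oplus$ and $\otimes$ correspond to quotient addition and multiplication, $\Zero$ corresponds to $0$, and the unity tensor $\One$ corresponds to $x^{\vec 0}$. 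Hence $(R^{\nat[\vm]},\oplus,\otimes,\Zero,\One)$ is isomorphic as an algebraic structure to a (commutative) semiring, and so it satisfies all the semiring axioms.

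The main obstacle is that the paper defines $\otimes$ through a general monoid operation $\vec\circ$ on indices, not only through $+$, so the polynomial picture literally covers only the additive case. To be safe, I would also give the direct argument. The identity is $\One$ provided $\vec 0$ is the neutral element of $\vec\circ$. Associativity of $\otimes$ is checked by writing $((T_1\otimes T_2)\otimes T_3)(\vec k)$ as a sum over triples with $(\vec k_1\circ\vec k_2)\circ\vec k_3=\vec k$ and invoking associativity of $\vec\circ$. This step needs the truncation to be compatible with the monoid: out-of-range intermediate indices must never contribute to in-range results. That holds for monotone operations such as $+$, $\max$ and $\min$ on $\nat$, and I would state it as the standing assumption. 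Distributivity and absorption follow pointwise from those laws in $R$. Commutativity of $\otimes$ needs both $\cdot_R$ and $\vec\circ$ to be commutative.
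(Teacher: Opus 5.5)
Your proposal is correct and follows the paper's route: the paper derives the lemma in one line from the folklore isomorphism between $R^{\nat[\vm]}$ and the truncated polynomial semiring $R[(x_j)_d]/\langle x_j^{m_j}\rangle_{j=1}^{d}$, and you fill in the details that line omits. Those are the box-agreement congruence (the correct substitute for a ring-style ideal quotient) and the identification of $\Zero$, $\One$ with $0$ and $x^{\vec 0}$. Your caveat about a general index monoid $\vec\circ$ is justified, because the polynomial picture covers only $\vec\circ = +$, and without a compatibility condition between $\vec\circ$ and the box, associativity of $\otimes$ can genuinely fail. The one slip is listing $\min$: it meets your associativity condition, since the box is closed under $\min$, but $\vec 0$ is absorbing rather than neutral for it, so it fails your unit condition.
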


\paragraph{Reduction from {\pb{MT}} to Unconstrained {\pb{ADTC}}.}
We present a reduction from the \textsc{Model Behavior Tensor} problem to the unconstrained \pb{ADTC} problem over the tensor semiring $R^{\nat[\vm]}$.

\begin{definition}[Reduction from {\pb{MT}} to unconstrained {\pb{ADTC}}]\rm 
  Given an instance $Q' = (\Delta, \alpha, \vsigma)$ of the \pb{MT} and input data $D = (S, y)$, we let $\alpha_*: \S \to R^{\nat[\vm]}$ be the aggregation function defined by the singleton tensor $\alpha_*(t, S) = [\vec i = \vec k]v \in R^{\nat[\vm]}$ for all $(t, S) \in \S$. 
  The instance of the unconstrained \pb{ADTC} problem consists of the query $Q = (\Delta, \alpha_*)$ and the data $D$. 
\end{definition}

Unconstrained \pb{ADTC} computes the $\oplus$-sum of $\alpha_*(t, S)$ over $t \in \M$. Definitions~\ref{def:problem:adtc} and~\ref{def:modeltensor} immediately yield \cref{lem:coincide:unconstraint:mbtensor:problems}. 

\begin{lemma}[Correctness of the reduction]
  \label{lem:coincide:unconstraint:mbtensor:problems}
  Let $R$ be any commutative semiring. For any rank-$d$ model tensor query $Q = (\Delta, \alpha, \vsigma)$, the solution $\pb{MT}[Q, D]$ over the tensor semiring $(R^{\nat[\vm]}, \oplus, \otimes, \Zero, \One)$ coincides with the solution of the \pb{ADTC} problem using the constructed unconstrained query $Q' = (\Delta, \alpha_*)$.
\end{lemma}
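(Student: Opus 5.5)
The plan is to compare the two sums entry by entry. Fix an index $\vec k \in \nat[\vm]$ and evaluate the $\oplus$-sum computed by the unconstrained \pb{ADTC} instance at $\vec k$.

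By \cref{def:problem:adtc} with the empty constraint, the solution of $Q' = (\Delta, \alpha_*)$ is
\begin{math}
  A = \bigoplus_{t \in \M} \alpha_*(t, S)
\end{math}
in $R^{\nat[\vm]}$. Pointwise addition $\oplus$ commutes with evaluation at a point. Hence
\begin{math}
  A(\vec k) = \sum_{t \in \M} \alpha_*(t, S)(\vec k),
\end{math}
where the outer sum is $+_R$.

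Next I unfold the singleton tensor. By definition, $\alpha_*(t, S)(\vec k) = \alpha(t, S)$ if $\vsigma(t,S) = \vec k$, and $\alpha_*(t, S)(\vec k) = 0_R$ otherwise. Since $0_R$ is the additive identity, the terms with $\vsigma(t,S) \neq \vec k$ drop out. This gives
\begin{math}
  A(\vec k) = \sum_{t \in \M:\ \vsigma(t,S)=\vec k} \alpha(t,S),
\end{math}
which is exactly $T(\vec k)$ in \cref{def:modeltensor} with the constraint $\phi$ taken to be trivially true. Since this holds for every $\vec k$, we get $A = \pb{MT}[Q,D]$.

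For the argument to be well defined, I will check three small points.
\begin{enumerate*}[label=(\roman*)]
\item The index $\vsigma(t,S)$ always lies in $\nat[\vm]$, so each singleton tensor is a legitimate element of $R^{\nat[\vm]}$; this follows because each $\sigma_i$ has range $[0..m_i]$.
\item The sums are finite, since $\M$ is finite for fixed $F, L, \Delta$.
\item $(R^{\nat[\vm]}, \oplus, \otimes, \Zero, \One)$ is a commutative semiring, by the Lifting lemma.
\end{enumerate*}
Point (iii) makes the unconstrained \pb{ADTC} problem over the tensor semiring meaningful, and it is what later allows \cref{lem:emt:correctness} to be applied.

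The main obstacle is one of interpretation rather than calculation. The statement concerns the \emph{solution} of \pb{ADTC}, i.e.\ the defining sum, and the argument above settles that directly. If one instead wants the \proc{EMT} recursion over tensors to compute this sum, then $\alpha_*$ must be decomposable via $\otimes$. Concretely, one needs
\begin{math}
  [\vec i = \vec k_1 \vec\circ \vec k_0]\,(v_1 v_0) = ([\vec i=\vec k_1]v_1) \otimes ([\vec i=\vec k_0]v_0),
\end{math}
which follows from the decomposability of $\alpha$ and $\vsigma$ and the definition of the truncated convolution. I would note this as a remark and keep it separate from the present lemma.
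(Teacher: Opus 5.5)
Your proof is correct and follows the paper's route: the paper simply notes that the unconstrained \pb{ADTC} instance computes the $\oplus$-sum of the singleton tensors $\alpha_*(t,S)$ over $t\in\M$, and that the lemma is then immediate from Definitions~\ref{def:problem:adtc} and~\ref{def:modeltensor}; your entrywise evaluation at each $\vec k$ makes exactly that observation explicit. For point (i), justify $\vsigma(t,S)\in\nat[\vm]$ by the standing assumption $\vsigma:\S\to\nat[\vm]$ of Sec.~\ref{subsec:mathod:notation} rather than by the range $[0..m_i]$, because the paper defines $\nat[\vm]=[m_1]\times\cdots\times[m_d]$ with $[m]=\{1,\dots,m\}$, and $[0..m_i]$ is not literally contained in $[m_i]$.
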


\begin{theorem}[Complexity of {\pb{MT}}]
  \label{thm:algo:emt:tensor}
  The \textsc{Model Behavior Tensor} problem over $R$ can be solved in 
  $O^*(m_*^d \cdot n^{O(\Delta)} \cdot t_R)$ time and
  $O^*(m_*^d \cdot s_R)$ space,
  where $Q = (\Delta, \alpha, \vsigma)$ is a rank-$d$ tensor query, $D = (S, y)$ is input data, and ${m_*} = \max_i m_i$ is the maximum range value. 
\end{theorem}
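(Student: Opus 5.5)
The plan is to obtain the theorem by composing three earlier results: the reduction to unconstrained \pb{ADTC} (\cref{lem:coincide:unconstraint:mbtensor:problems}), the Lifting lemma, and the complexity bound of \cref{thm:algo:emt:unconst}, instantiated with the tensor semiring $R^{\nat[\vm]}$ in place of $R$.

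First, by the Lifting lemma, $(R^{\nat[\vm]}, \oplus, \otimes, \Zero, \One)$ is a (commutative) semiring whenever $R$ is. Hence \proc{EMT} and its correctness (\cref{lem:emt:correctness}) apply verbatim over this semiring.

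Second, I need the aggregation $\alpha_*(t,S) = [\vec i = \vsigma(t,S)]\,\alpha(t,S)$ to be decomposable via $\otimes$, i.e.\ $\alpha_* = \HOM[\ud{\alpha_*}, \otimes]$.
\begin{itemize}
\item For a composite tree $t = \pair{f, t_1, t_0}$, the convolution of two singleton tensors is again a singleton tensor: $([\vec i=\vec k_1]v_1)\otimes([\vec i=\vec k_0]v_0) = [\vec i = \vec k_1\vec\circ\vec k_0](v_1\cdot_R v_0)$.
\item This identity follows because exactly one pair $(\vec k_1,\vec k_0)$ contributes to the convolution sum, and $\Zero$ is absorbing.
\item Using the decompositions $\vsigma = \HOM[\ud{\vsigma},\vec\circ]$ and $\alpha = \HOM[\ud{\alpha},\cdot_R]$ assumed in \cref{subsec:mathod:notation}, we get $\alpha_*(t,S) = \alpha_*(t_1,S_{f=1})\otimes\alpha_*(t_0,S_{f=0})$.
\item The leaf function is $\ud{\alpha_*}(\ell,S) = [\vec i=\ud{\vsigma}(\ell,S)]\,\ud{\alpha}(\ell,S)$.
\end{itemize}
One caveat: the index $\vec k_1\vec\circ\vec k_0$ must stay inside $\nat[\vm]$. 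This holds by the assumption that $\vsigma$ maps into $\nat[\vm]$, so truncation never discards a genuine tree. This compatibility check is the step I expect to need the most care.

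Third, running \proc{EMT} on the unconstrained query $(\Delta, \alpha_*)$ computes the $\oplus$-sum of $\alpha_*(t,S)$ over all $t\in\M$. By \cref{lem:coincide:unconstraint:mbtensor:problems}, this sum equals $\pb{MT}[Q,D]$, which gives correctness.

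For complexity, I invoke \cref{thm:algo:emt:unconst} with $t_{R'}$ and $s_{R'}$ being the costs of operations in $R' = R^{\nat[\vm]}$.
\begin{itemize}
\item A tensor has $|\nat[\vm]| \le m_*^d$ entries, so storing one takes $O(m_*^d s_R)$ space, and $\oplus$ takes $O(m_*^d t_R)$ time.
\item The naive truncated convolution $\otimes$ loops over all pairs $(\vec k_1,\vec k_0)$, costing $O(m_*^{2d} t_R)$.
\item The leaf tensor $\ud{\alpha_*}(\ell,S)$ is formed in time polynomial in $|S|$.
\end{itemize}

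The main obstacle is reconciling the $m_*^{2d}$ cost of convolution with the claimed $m_*^d$ factor. My first approach is to use the structure of the monoids $\circ_i$. For additive metrics such as size and error, $\vec k_1\vec\circ\vec k_0=\vec k$ with $\vec k_0$ determined by $\vec k_1$ and $\vec k$. For idempotent operators such as $\min$ and $\max$, one can use prefix sums. In both cases the count of contributing pairs is at most $m_*^d$ per output entry.

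If that fails, I would absorb the extra factor, since $m_i$ is polynomial in $|S|$ and $2^\Delta$, so $m_*^{2d}$ differs from $m_*^d$ only by a factor treated as hidden in the $O^*$ notation. Either way, multiplying the $O(2^\Delta n^\Delta |L|)$ iteration count from the proof of \cref{thm:algo:emt:unconst} by the per-operation tensor cost yields $O^*(m_*^d\, n^{O(\Delta)}\, t_R)$ time. The space bound is $O(\Delta)$ stacked tensors of size $m_*^d s_R$, i.e.\ $O^*(m_*^d s_R)$.
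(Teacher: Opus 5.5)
Your route is the paper's: lift $R$ to the tensor semiring $R^{\nat[\vm]}$, run \proc{EMT} on the unconstrained query $(\Delta,\alpha_*)$ via \cref{lem:coincide:unconstraint:mbtensor:problems}, and price each tensor operation in units of $t_R$ and $s_R$ using the fact that a tensor has at most $m_*^d$ cells. Your check that $\alpha_*$ is decomposable via $\otimes$ is correct: a convolution of singletons is a singleton, and the combined index stays in $\nat[\vm]$ because $\vsigma$ maps every sub-structure there. The paper's sketch takes this step for granted. You also rightly base the complexity on \cref{thm:algo:emt:unconst}, whereas the sketch cites \cref{thm:algo:emt:main}, which is itself derived from this theorem. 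The space bound of $O(\Delta)$ stacked tensors of $m_*^d$ cells each is fine.

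The step that fails is your first way of reconciling the convolution cost. Bounding the contributing pairs $(\vec k_1,\vec k_0)$ by $m_*^d$ \emph{per output entry} still gives $m_*^d\cdot m_*^d=m_*^{2d}$ semiring operations per $\otimes$, since there are up to $m_*^d$ output entries. That is exactly the naive cost. Prefix sums do bring the cost down to about $2^{O(d)} m_*^d$ for $\max$/$\min$-type index monoids such as support. But the central metrics (size, error, contingency counts) are additive, and a general semiring has no subtraction, so no FFT-style speedup applies. For the min-plus semiring, $\otimes$ is a $(\min,+)$-convolution, for which no truly subquadratic algorithm is known.

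What your argument actually proves is $O^*(m_*^{2d}\, n^{O(\Delta)}\, t_R)$ time, or more precisely $(\prod_i m_i)^2$ in place of $m_*^{2d}$. Your fallback reaches the stated form only by hiding an extra factor $m_*^d$ in $O^*$. That is legitimate only if $d$ is fixed, so that $m_*^d\le \mathrm{poly}(|S|)\cdot 2^{O(\Delta)}$ is absorbed by $O^*(n^{O(\Delta)})$. Under that reading, the explicit $m_*^d$ in the statement is absorbed as well, so the displayed dependence on $m_*$ carries no information.

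The paper's sketch has the same hole: it counts only tensor volume, which bounds storage and $\oplus$ but not $\otimes$. So you inherit the gap rather than introduce it. Still, you should drop the per-entry argument and either state the bound quadratic in tensor volume or make the fixed-$d$ assumption explicit.
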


\begin{proofsketch}
  Since the procedure is identical to Algorithm \proc{EMT}, the correctness and complexity analysis follow directly from \cref{thm:algo:emt:main}. We estimate the time and space complexities of tensor operations using the costs $t_R$ and $s_R$ of the underlying semiring $R$. By assumption, the maximum range size of the $d$ metrics is bounded by ${m_*}$. Thus, the volume (i.e., the number of effective entries) of any model behavior tensor $T$ is at most $O(m_*^d)$. Since each entry requires bounded space $s_R$, the result follows.
\qed   
\end{proofsketch}



\subsection{Putting it together}
\label{sec:method:together}
Given the model behavior tensor $T$, the solution to \pb{ADTC} is obtained by filtering out entries that violate $\phi[\vec k]$ and summing the weights $\alpha(t, S)$ of the remaining entries. Thus, the following theorem follows from \cref{thm:algo:emt:tensor}.

\begin{theorem}[Complexities of $\pb{ADTC}$ with general constraint]
  \label{thm:algo:emt:main}
  Given a general rank-$d$ query $Q = (\Delta, \alpha, \vsigma, \phi[\vsigma])$
  and an input data $D = (S, y)$,
  the general $\pb{ADTC}$ problem can be solved in
  the asymptotically same time and space complexities as
  \cref{thm:algo:emt:tensor}. 
\end{theorem}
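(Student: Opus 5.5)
We reduce the general constrained query to the \pb{MT} problem and then post-process the resulting tensor. Given $Q = (\Delta, \alpha, \vsigma, \phi[\vsigma])$ and $D = (S, y)$, we first form the model tensor query $Q' = (\Delta, \alpha, \vsigma)$ by dropping $\phi$. By \cref{lem:coincide:unconstraint:mbtensor:problems}, running \proc{EMT} over the tensor semiring $(R^{\nat[\vm]}, \oplus, \otimes, \Zero, \One)$ with aggregation $\alpha_*$ computes $T = \pb{MT}[Q', D]$. Here $T(\vec k) = \sum_{t \in \M:\, \vsigma(t,S) = \vec k} \alpha(t, S)$. By \cref{thm:algo:emt:tensor}, this takes $O^*(m_*^d \cdot n^{O(\Delta)} \cdot t_R)$ time and $O^*(m_*^d \cdot s_R)$ space.

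The key identity is the filtering step. As noted in \cref{subsec:prelim:constraint}, the truth of $(t, S) \models \phi$ depends only on the index $\vec k = \vsigma(t, S)$. Write $\phi[\vec k]$ for the truth value of $\phi$ under the valuation $\ud{\vsigma} \mapsto \vec k$. Since the sets $\{t \in \M : \vsigma(t, S) = \vec k\}$, for $\vec k \in \nat[\vm]$, partition $\M$, we can regroup the sum and use commutativity and associativity of $+_R$:
\begin{align}
  \ADTC(Q, D) = \sum_{\vec k \in \nat[\vm]:\, \phi[\vec k]} \; \sum_{t:\, \vsigma(t,S) = \vec k} \alpha(t, S) = \sum_{\vec k \in \nat[\vm]:\, \phi[\vec k]} T(\vec k).
\end{align}
Correctness therefore reduces to the correctness of the tensor computation plus this regrouping.

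For the complexity of the post-processing, we scan all at most $m_*^d$ entries of $T$. For each entry we evaluate $\phi[\vec k]$, which is a Boolean combination of rational-expression inequalities over $d$ integers. Its cost is polynomial in the size of $\phi$ and in the bit-length of the $k_i$, which are bounded by polynomials in the input size, so it is absorbed into $O^*$. We then perform at most $m_*^d$ additions in $R$. This adds $O^*(m_*^d \cdot t_R)$ time and $O(s_R)$ extra space, both dominated by the bounds of \cref{thm:algo:emt:tensor}.

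The main obstacle is the evaluation of $\phi$ itself. We must ensure that rational terms with zero denominators get a fixed convention (say, the atom evaluates false), and that exact rational arithmetic on the indices stays polynomial-size. We handle this by clearing denominators: an atom $P/Q \le c$ is checked via integer polynomial comparisons after determining the sign of $Q$. The values are then polynomial-degree expressions in integers bounded by $m_*$, so their bit-length is polynomial. With this, the stated bounds match those of \cref{thm:algo:emt:tensor} asymptotically.
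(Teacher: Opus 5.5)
Your proposal is correct and follows the paper's argument. Compute the model behavior tensor $T$ for the constraint-free query via \cref{thm:algo:emt:tensor}, then filter its at most $m_*^d$ cells by $\phi[\vec k]$ (valid because the truth of $\phi$ depends only on $\vec k = \vsigma(t,S)$) and $+_R$-sum the surviving entries, with this post-processing cost dominated by the tensor construction; your explicit regrouping identity and the discussion of evaluating $\phi$ (clearing denominators, zero-denominator convention) spell out details the paper's one-line justification leaves implicit.
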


Finally, the cell size $s_R$ and operation time $t_R$ over $R$ depend on the maximum entry size $M$ in $T$. In the worst case, $M$ reaches $O(2^\Delta)$ if all $|\M| = n^{\Theta(2^\Delta)}$ trees
fall 
into a single cell, yielding $s_R = \Theta(\log |\M|) = \Theta(2^\Delta \log n)$.
In practice, however,
appropriate hyperparameters like  
depth $\Delta$ and
\texttt{minsup}
effectively bound $M$ 
(\cref{sec:exp}).




\section{Extensions}
\label{sec:ext}

In this section, we introduce extensions to algorithm \pb{EMT} in \cref{sec:method} that enable the use of complex, nonlinear metrics as well as selection and sampling of concrete models. 

\subsection{Extensions to contingency tables and nonlinear model metrics}
\label{sec:ext:nonlinear:metrics}


Global analysis often involves complex \textit{nonlinear metrics} \cite{demirovic:aaai2021nonlinear:metrics}. To address this, we use \textit{rational metrics}, functions defined by a fraction $P(x)/Q(x)$ of multivariate polynomials over primitive metrics (\cref{subsec:prelim:constraint}),
Many statistical scores~\cite{shwartz:ben-david2014coltbook} can be expressed as rational metrics. 
Below, we introduce rational metrics definable in terms of \textit{2-way} and \textit{3-way contingency tables}~\cite{hastie2001eslbook}. Let indices $i, j, k \in \set{0, 1}$ represent the \textit{true label} $y$, \textit{predicted label} $\hat y = f_t(x)$, and \textit{sensitive attribute} $z$ in a dataset $S$, respectively. For a tree $t$, the 2-way table entry $N_{ij}(t; S)$ is defined as $\sum_{x\in S} \One[(y(x)= i) \land (f_t(x) = j)]$. Entries $N_{ijk}(t; S)$ of a 3-way table are defined analogously. Cell probabilities are then easily derived (e.g., true positives $tp = N_{11}/|S|$).

Using the contingency tables, we can formulate a variety of score functions \cite{shwartz:ben-david2014coltbook}, such as the F1-score~\cite{hastie2001eslbook} and Equalized odds (\texttt{EOdd})~\cite{hardt2016equality} as follows:
\begin{enumerate}[label=(\roman*)]
  \item \textit{F1-score}: $F1(t; S) = 2 N_{11}/(2 N_{11} + N_{01} + N_{10})$, and
  \item \textit{Equalized odds}: 
  $EOdd(t; S) = | \infrac{N_{011}}{N_{0*1}} - \infrac{N_{010}}{N_{0*0}} | + | \infrac{N_{111}}{N_{1*1}}$ $-$ $\infrac{N_{110}}{N_{1*0}} |$,
  where $N_{i*k} = N_{i1k} + N_{i0k}$ denotes the marginal count for true label $i$ and sensitive attribute $k$.
\end{enumerate}

As a practical optimization for \pb{ADTC}, we minimize the tensor dimensionality by isolating and maintaining \textit{model-independent metrics} outside a tensor $T$.
For instance, $N_{i*k}$ for any $i,k \in \set{0,1,*}$ is a model-independent metric because its prediction label index $j$ is fixed to $*$. 
The \textit{counts of all, positive}, and \textit{sensitive examples}, specifically $N_{***} = |S|$, $N_{1**} = N_{y=1}$, and $N_{**1} = N_{z=1}$, fall into this class.
This leads to significant savings in both memory and construction time.


\begin{figure}[t]
    \centering
    \begin{minipage}[t]{0.48\linewidth}
        \vspace{0pt}
        \centering
        \includegraphics[height=0.85\linewidth]{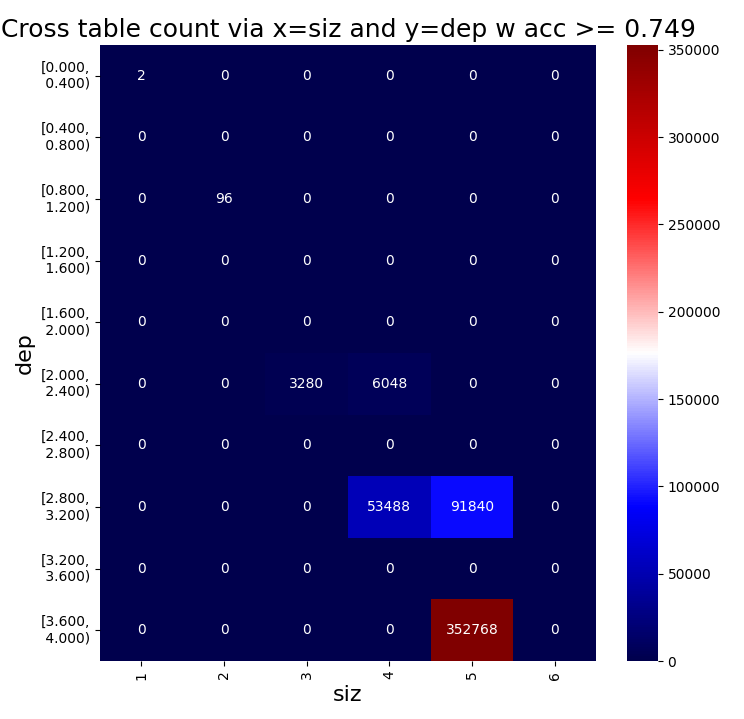}
    \end{minipage}
    \hfill
    \begin{minipage}[t]{0.48\linewidth}
        \vspace{0pt}
        \centering
        \includegraphics[height=0.85\linewidth]{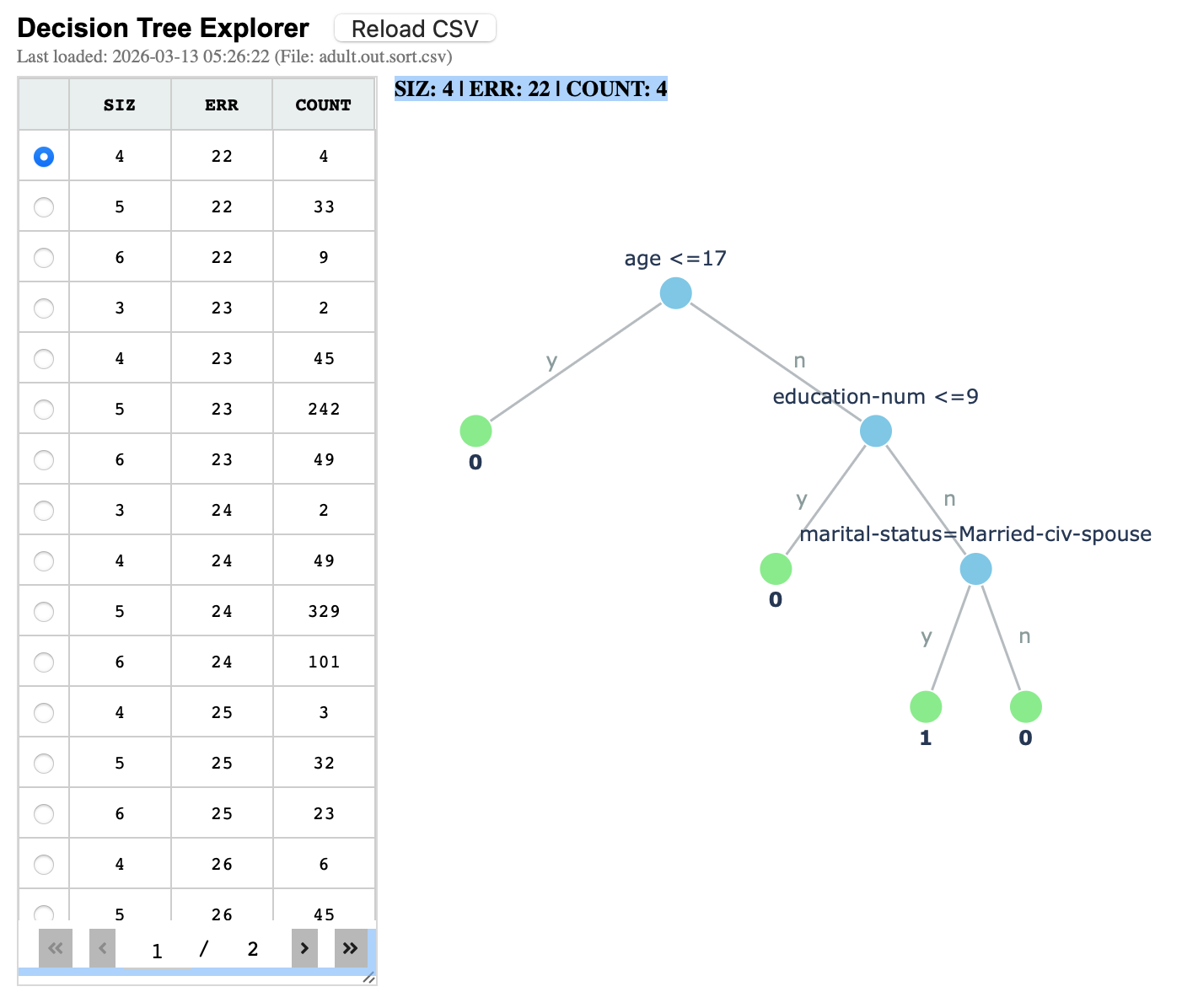}
    \end{minipage}
    \vspace{1mm}
    
    \begin{minipage}[t]{0.48\linewidth}
        \centering
        \small (a) Model size vs. depth ($\text{acc} \ge 0.749$)
    \end{minipage}
    \hfill
    \begin{minipage}[t]{0.48\linewidth}
        \centering
        \small (b) Dashboard interface
    \end{minipage}
    \vspace{1mm}
    \caption{(a) Model profiles generated by \pb{ADTC} on the \texttt{adult} dataset showing the structural distribution of near-optimal models
      (\cref{subsec:exp:impact}).
      (b) Dashboard interface for selecting a model from the cross-table to display its decision tree
      (\cref{subsec:method:sampling}). 
    }
    \label{fig:dashboard:size_dep}
    \vspace{-3mm}
  \end{figure}

\subsection{Selection and sampling of supporting models}
\label{subsec:method:sampling}

By modifying the algorithm \proc{EMT}, we can efficiently implement the selection and random sampling of decision trees that support the value of ADTC(R) in a model behavior tensor $T$.
For this purpose, we use the technique of a \textit{semiring extension} for selection and sampling, recently proposed by Goral \textit{et al.}~\cite{goral:giesen:blacher:staudt:kaus:aaai2024model} (see also \cite{arimura:osabe:uno2017ifors} for a similar technique).
To do this, we construct the product semiring $\mathbb{N}\times\mathbb{T}$ of the count semiring $\mathbb{N}$ and the pseudo semiring $\mathbb{T}$ of decision tree syntaxes.
For each index $\vec k \in \mathbb{N}[\vm]$,
the $\vec k$-th cell of $T$ holds a pair $T(\vec k) = (N_{\vec k}, t_{\vec k})$ consisting of a count $N_{\vec k} \in \mathbb{N}$ and a tree $t_{\vec k} \in \M$. 
The first semiring $\mathbb{N} = (\mathbb{N}, +, \times)$ maintains the count $N_{\vec k}$ of trees falling into the $\vec k$-th cell of $T$, $N_{\vec k} = T(\vec k) \in \mathbb{N}$, for all indices $\vec k \in \mathbb{N}[\vm]$. 
On the other hand, the second pseudo semiring $(\mathbb{T}, +, \set{\pair{f,\cdot,\cdot}}_{f \in F})$ holds a tree $t$ uniformly sampled by
the probability $P_{\vec k} = N_{\vec k}/N_\idrm{total} \in [0,1]$,
where $N_\idrm{total} = \sum_{\vec k} N_{\vec k} = |\M|$ is the total count. Since the rest of the construction is almost the same as Goral \textit{et al.}~\cite{goral:giesen:blacher:staudt:kaus:aaai2024model}, we omit the details.
We implemented this function in our software \texttt{emtrees}. Fig.~\ref{fig:dashboard:size_dep} shows our dashboard interface, rendering a randomly sampled
tree from the associated cell.


\begin{figure}[t]
\centering
\begin{minipage}[t]{0.85\linewidth}
  \centering
  \includegraphics[width=\linewidth]{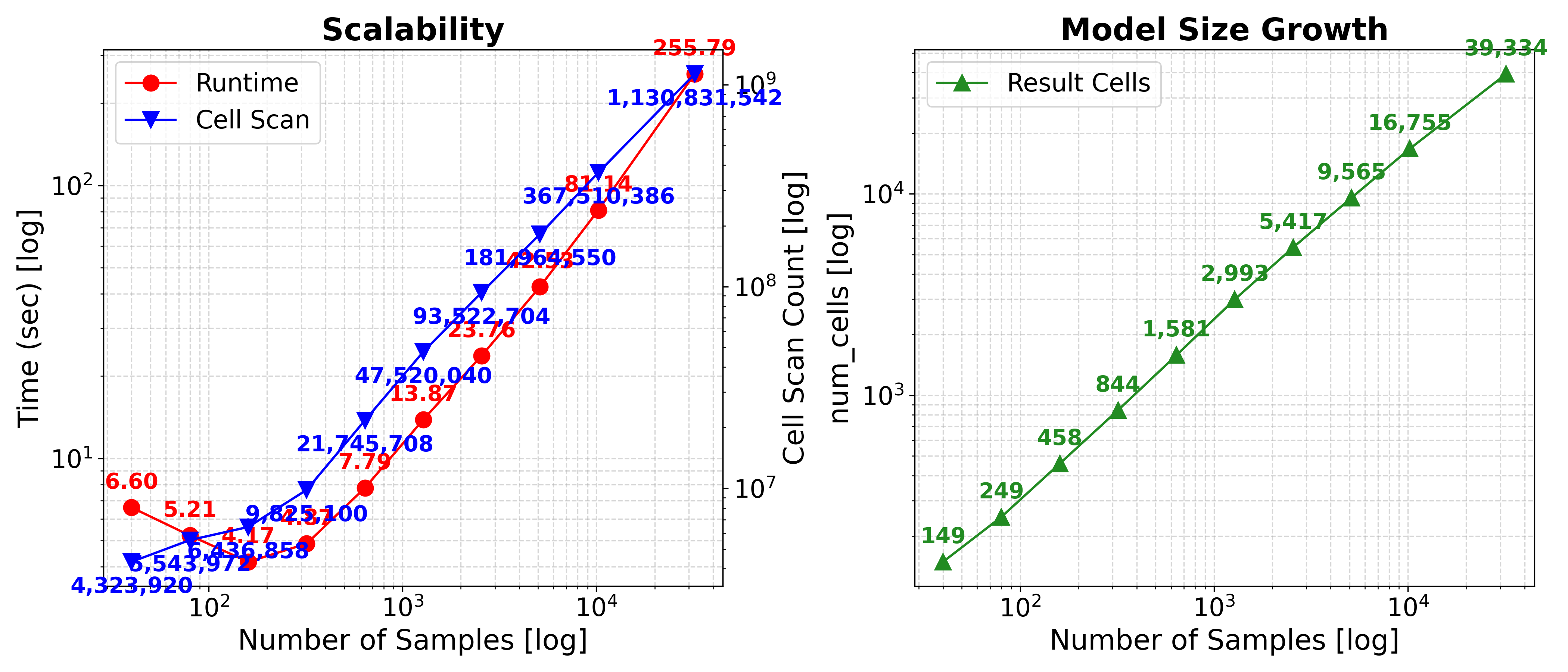}
  \caption{Scalability of \pb{ADTC} relative to data size (\cref{subsec:exp:scalability}).}
  \label{fig:scalability}
\end{minipage}
\end{figure}

\section{Experimental Evaluation}
\label{sec:exp}
We evaluate \pb{ADTC} on real-world data to demonstrate its scalability and utility in providing a transparent and rigorous analysis of
high-performing decision trees.

\subsection{Setup}
\textbf{Dataset.} We use the UCI \texttt{adult} dataset
(32,129 data) as a standard benchmark for research on fairness and predictive multiplicity, aiming to predict whether annual income exceeds \$50,000 using mixed categorical and continuous variables.

\textbf{Measurement Protocol.}
\textit{Computational cost} is evaluated using: (i) \texttt{time} (execution time in seconds); (ii) \texttt{\texttt{cell\_scan}} (total number of cells scanned as a platform-independent cost); and (iii) \texttt{num\_cells} (non-zero entries in the resulting tensors).
\textit{For parameter configurations},
we use the following settings unless otherwise stated: 
$\texttt{maxdep}=5$ (maximum depth), 
$\texttt{nbins}=7$ (number of bins in discretization of numerical features), 
$\texttt{relminsup}=0.15$ (relative minimum support), 
and $\texttt{siz}=6$ (tree size).
\textit{Data scalability analysis} varies $n_{\text{data}}$ from 40 to 32,129 (all data). 
\textit{Structural complexity analysis} scales \texttt{maxdep} and \texttt{siz} proportionally. 
\textit{Constraint relaxation} analysis varies \texttt{siz} from 1 to 16 with $n_{\text{data}}=640$ and $\texttt{maxdep}=4$.
\paragraph{Environment}:
The system \texttt{emtrees} is implemented in Python 3.12 and executed on an Apple M1 Pro PC (16GB memory, macOS 15.7.1).
The source code is publicly available.
\footnote{\texttt{emtrees}:~\url{https://doi.org/10.5281/zenodo.20842908}}

\begin{figure}[t]
\centering  
\begin{minipage}[t]{0.85\linewidth}
  \centering
  \includegraphics[width=\linewidth]{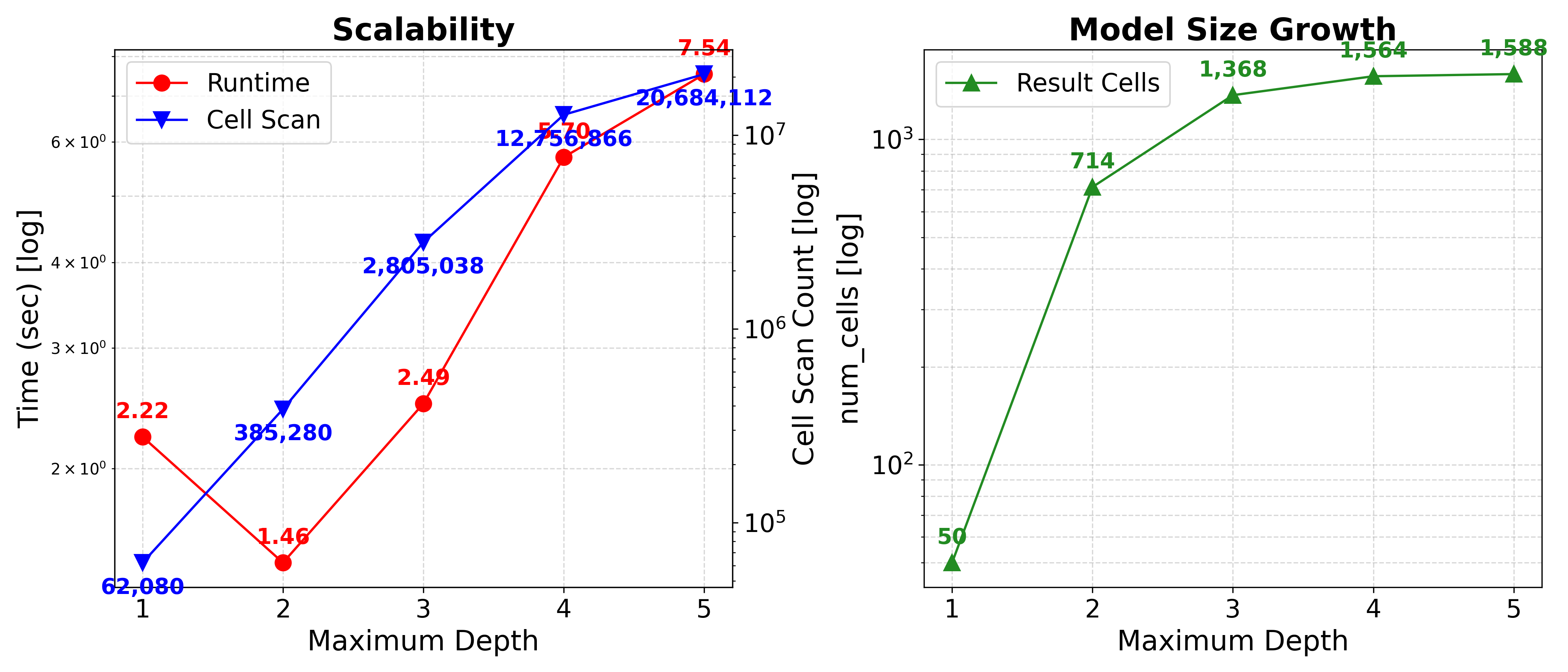}
  \caption{Impact of maximum depth on efficiency (\cref{subsec:exp:scalability}).}
  \label{fig:depth_impact}
  \centering
  \includegraphics[width=\linewidth]{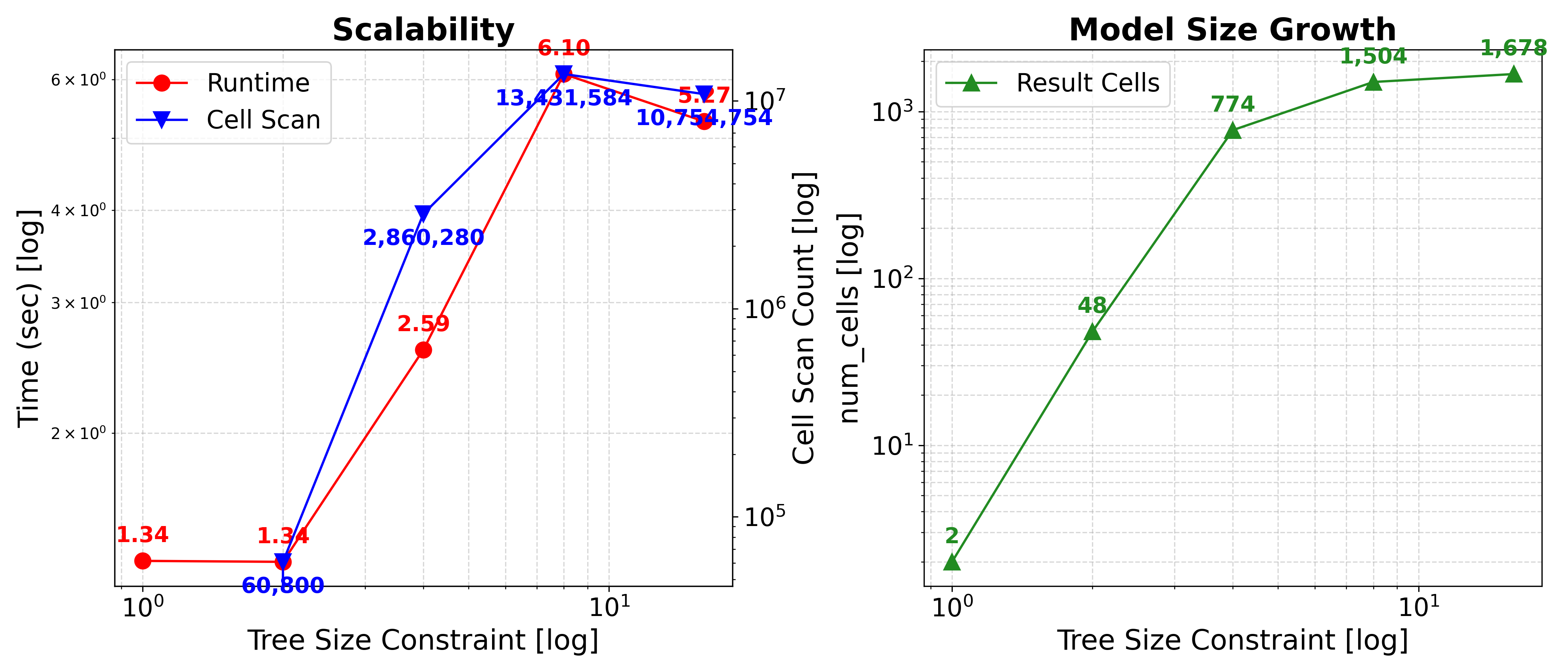}
  \caption{Effect of tree size on resolution (\cref{subsec:exp:scalability}).}
  \label{fig:size_impact}
\end{minipage}
\vspace{-3mm}
\end{figure}

\subsection{Scalability and computational efficiency}
\label{subsec:exp:scalability}
Our empirical evaluation demonstrates that \pb{ADTC} exhibits scalable and stable behavior across all parameters.
Our program ran within several minutes on mid-sized datasets such as UCI \texttt{mushroom}, \texttt{adult},%
\footnote{\url{https://archive.ics.uci.edu}}
and \texttt{COMPAS} 
\footnote{\url{https://www.kaggle.com/datasets/danofer/compass}}
under the
parameter settings above. 

\textbf{Scalability with input size.}
As shown in Fig.~\ref{fig:scalability}, the runtime and \texttt{cell\_scan} follow near-linear trends on a log-log scale, confirming polynomial complexity relative to $n_{\text{data}}$.
Even with an 800-fold increase in data size, the runtime remains manageable
at 255.79 seconds for the full dataset
($1.1\times 10^9$ cells scanned).
Concurrently, \texttt{num\_cells} grows proportionally, indicating a more refined model profile for larger datasets.

\textbf{Impact of structural constraints.}
We evaluate how structural expansion affects efficiency. 
\textit{Tree depth}: Fig.~\ref{fig:depth_impact} shows that the \texttt{cell\_scan} grows as depth $\Delta$ (\texttt{maxdep})
relaxes, reflecting the expansion of the search space. Despite the doubly exponential nature of the syntactical space, the operations remain stable, reaching $\approx 2.0 \times 10^7$ scanned cells at $\Delta = 5$.
\textit{Tree size constraint}: Fig.~\ref{fig:size_impact} shows that relaxing the tree size constraint $s$ (\texttt{siz}) increases \texttt{num\_cells}, yielding a more detailed model profile.
Stabilization of \texttt{cell\_scan} at higher \texttt{siz} values suggests that model tensors efficiently aggregate properties once structural complexity is captured, incurring minimal overhead.

\subsection{Impact of model complexity on the Rashomon set}
\label{subsec:exp:impact}

We analyze the structural properties of near-optimal models by examining the relationship between model size (\texttt{siz}) and depth (\texttt{maxdep}). We focus on high-performing models with $\text{acc} \ge 0.749$ in the Rashomon set.
Fig.~\ref{fig:dashboard:size_dep}~(a) (in \cref{subsec:method:sampling}) reveals that 352,768 top-performing trees cluster in the range of depth $4$ and sizes $4\sim 5$, showing strong structural consistency.
This highlights the trade-off between interpretability and predictive performance under diverse deployment scenarios.
See \cref{subsec:method:sampling} for details on the dashboard shown in Fig.~\ref{fig:dashboard:size_dep}~(b).

\subsection{Trade-off analysis between accuracy and fairness}
\label{subsec:exp:tradeoff}
%
%
To demonstrate how \pb{ADTC} supports evidence-based model selection, Fig.~\ref{fig:profile:acc-fone:acc-dpgap} (in \cref{sec:intro}) and Fig.~\ref{fig:profile:acc-eodd:acc-eopp} show the distribution of the 34,706 models extracted from the Rashomon set on the \texttt{adult} dataset. These distributions are evaluated with respect to \textit{accuracy} (\texttt{acc}), \textit{F1-score} (\texttt{F1}), and fairness metrics, including \textit{demographic parity} (\texttt{DP}), \textit{equalized odds} (\texttt{EOdd}), and \textit{equal opportunity} (\texttt{EOpp})~\cite{hardt2016equality}.
The obtained cross-tables provides a rigorous model profile that serves as objective evidence for model selection.

For instance, in Fig.~\ref{fig:profile:acc-eodd:acc-eopp}~(a), 
our framework identifies a \textit{large cluster of $15,340$ models} in the bin with the \textit{highest accuracy ($\texttt{acc} \in [0.765, 0.831]$) and low fairness error}
$\texttt{EOdd} \in [-0.029, 0.086]$),
while revealing a \textit{smaller cluster of $688$ models} with
the \textit{same accuracy but significantly higher fairness error}
($\texttt{EOdd} \in [0.886, 1.000]$).
Conversely, many models with moderate accuracy exhibit high fairness disparity,
such as the 10,906 models
with $\texttt{acc} \in [0.566, 0.633]$ and high \texttt{EOdd}. 
By providing a complete landscape of the Rashomon set, \pb{ADTC} facilitates evidence-based model selection, allowing practitioners to prioritize high-accuracy models with minimal fairness disparity from numerous candidates rather than relying on a single heuristic output.

\begin{figure}[t]
  \centering
  \begin{subfigure}[t]{\profilewidth}
    \centering
    \includegraphics[width=0.95\linewidth]{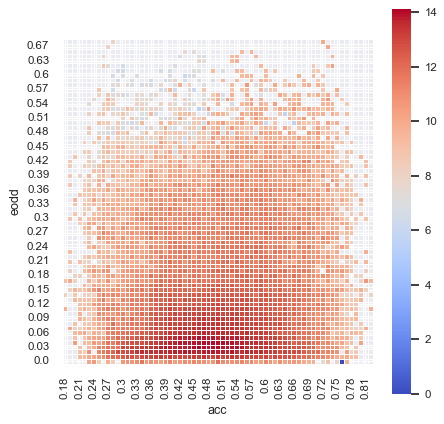}
    \caption{Accuracy vs. EOdd gap.}
  \end{subfigure}
  \hfill
  \begin{subfigure}[t]{\profilewidth}
    \centering
    \includegraphics[width=0.95\linewidth]{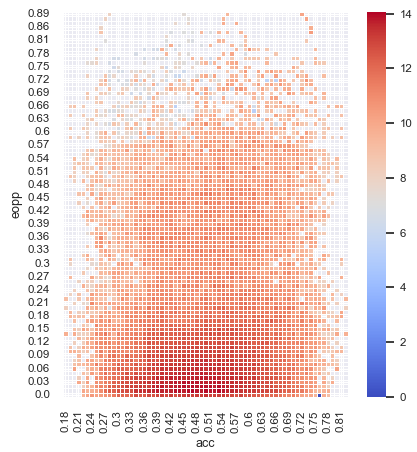}
    \caption{Accuracy vs. EOpp gap.}
  \end{subfigure}
  \vspace{-2mm}
  \caption{Model profiles generated by \pb{ADTC} on the \texttt{adult} dataset. Horizontal axes denote accuracy; vertical axes represent (c) EOdd and (d) EOpp gaps. Higher values are better for
    accuracy, 
    whereas lower values indicate better fairness (\cref{subsec:exp:tradeoff}).
  }
  \label{fig:profile:acc-eodd:acc-eopp} 
  \vspace{-3mm}
\end{figure}

\subsection{Assessment of preprocessing}
\label{subsec:exp:assessment}
In Fig.~\ref{fig:profile:acc:bacc} (in \cref{sec:intro}), 
we show the model profiles on \textit{accuracy} (\texttt{acc}) and \textit{balanced accuracy} (\texttt{bacc}) \cite{shwartz:ben-david2014coltbook} of near-optimal models generated by \pb{ADTC} on the original \texttt{adult} dataset and its class-balanced dataset.
They
show a discrepancy between \texttt{acc} and \texttt{bacc}
in the original dataset, which is successfully calibrated in the balanced dataset.

\section{Conclusion}
This paper presents \pb{ADTC}, a formal framework for the global assessment of decision tree spaces. Through the use of model behavior tensors and an algebraic formulation, we provide a scalable methodology for evidence-based model selection, enabling a rigorous analysis of the landscape of interpretable models. 

Theoretically, this research contributes by formulating the \pb{ADTC} problem and developing efficient algorithms that establish complexity upper bounds over general semirings.
For future work, extending the \pb{ADTC} framework to other interpretable models is a promising direction, building upon existing enumeration techniques for
\textit{LASSO models} \cite{hara2017enumerate},
\textit{support vector machines} \cite{kanamori2019enumeration}, and
\textit{rule lists with profile construction} \cite{mata2022computing}.
Finally, we will investigate the computational complexity of \pb{ADTC} relative to standard counting classes like Valiant's $\# \idrm{P}$ \cite{valiant1979sharpp:complexity} and the semiring-based class $\idrm{NP}_\infty(R)$ recently proposed in \cite{BDEKNP2025fagins}.
Our tensor-manipulation software, \texttt{emtrees}, introduced in Sec. 3 and Sec. 4, is publicly available at \url{https://doi.org/10.5281/zenodo.20842908}. 

\subsubsection*{Acknowledgments.}
The author thanks Koji Tsuda and Jun Sese for initially drawing attention to the optimal decision tree problem,
and Yasuko Matsui for invaluable discussions on our preliminary research~\cite{arimura:osabe:uno2017ifors} during the organized session at IFORS 2017 in Québec.
The author is also grateful to
Ichigaku Takigawa, 
Shinya Takamaeda-Yamazaki, 
Atsuyoshi Nakamura, 
Masato Motomura, 
and the members of the CREST project. 
The author is also grateful to 
Kazuki Yoshizoe,
Yasuaki Kobayashi, 
Norihito Yasuda, 
Takeaki Uno, and 
Shinichi Minato,
during the AFSA project meeting for their comments and discussions.
This work is supported by
MEXT/JSPS KAKENHI Grant Number
26K02980, 
20H00595, 
20H05963, 
and
JST CREST
18070962. 



%
%
\bibliographystyle{splncs04}
\bibliography{dtree,rasho,uni,npsr}
%

\end{document}